
\typeout{Enhancing Temporal Planning 
by Sequential Macro-actions (Extended Version)}


\documentclass{article}
\pdfpagewidth=8.5in
\pdfpageheight=11in

\usepackage{kr}

\usepackage{soul}
\usepackage{url}
\usepackage[hidelinks]{hyperref}
\usepackage[utf8]{inputenc}
\usepackage[small]{caption}
\usepackage{booktabs}
\usepackage{algorithm}
\usepackage{algorithmic}
\urlstyle{same}





\pdfinfo{
/TemplateVersion (KR.2022.0, KR.2023.0)
}

\usepackage{helvet,times,courier}
\usepackage{amsmath}
\usepackage{amssymb}
\usepackage{amsthm}
\theoremstyle{definition}
\newtheorem{definition}{Definition}
\newtheorem{theorem}{Theorem}
\newtheorem{proposition}{Proposition}

\newtheorem{corollary}{Corollary}
\newtheorem{example}{Example}
\usepackage{graphicx}
\usepackage{tikz}
\usetikzlibrary{positioning,arrows,calc,shapes.multipart}

\addtolength{\dbltextfloatsep}{-0.7pt}

\urlstyle{tt}

\newcommand{\vdashv}{%
  \vdash\mathrel{\mkern-9mu}\dashv
}

\newcommand{\refero}[1]{#1}
\newcommand{\refer}[2]{#1(#2)}

\newcommand{\act}{\ensuremath{a}}
\newcommand{\pre}{\ensuremath{\mathit{pre}}}
\newcommand{\eff}{\ensuremath{\mathit{eff}}}
\newcommand{\efp}{\ensuremath{\mathit{add}}}
\newcommand{\efm}{\ensuremath{\mathit{del}}}
\newcommand{\dur}{\ensuremath{\mathit{dur}}}
\newcommand{\var}{\ensuremath{v}}
\newcommand{\task}{\ensuremath{\mathcal{T}}}
\newcommand{\vars}{\ensuremath{V}}
\newcommand{\acts}{\ensuremath{A}}
\newcommand{\state}{\ensuremath{s}}
\newcommand{\aux}{\ensuremath{x}}
\newcommand{\auxu}{\ensuremath{X}}
\newcommand{\lit}{\ensuremath{\ell}}
\newcommand{\init}{\ensuremath{\state_0}}
\newcommand{\goal}{\ensuremath{\state_\ast}}
\newcommand{\plan}{\ensuremath{\mathcal{P}}}
\newcommand{\stamp}{\ensuremath{t}}
\newcommand{\seq}{\ensuremath{P}}
\newcommand{\effq}{\ensuremath{E}}
\newcommand{\auxs}[1]{\ensuremath{{#1^\aux}}}
\newcommand{\auxz}[1]{\ensuremath{{#1^0}}}
\newcommand{\seqi}[1]{\ensuremath{\seq[#1]}}
\newcommand{\seqj}[1]{\ensuremath{\seq'[#1]}}
\newcommand{\stampi}[1]{\ensuremath{\stamp[#1]}}
\newcommand{\statei}[1]{\ensuremath{\state[#1]}}
\newcommand{\effi}[1]{\ensuremath{\effq[#1]}}
\newcommand{\ups}[1]{\ensuremath{{#1}^{\vdash}}}
\newcommand{\upd}[1]{\ensuremath{{#1}^{\vdashv}}}
\newcommand{\upe}[1]{\ensuremath{{#1}^{\dashv}}}
\newcommand{\dura}[1]{\ensuremath{\dur(#1)}}
\newcommand{\pres}[1]{\ensuremath{\ups{\pre}(#1)}}
\newcommand{\pred}[1]{\ensuremath{\upd{\pre}(#1)}}
\newcommand{\pree}[1]{\ensuremath{\upe{\pre}(#1)}}
\newcommand{\effs}[1]{\ensuremath{\ups{\eff}(#1)}}
\newcommand{\effe}[1]{\ensuremath{\upe{\eff}(#1)}}

\newcommand{\effsp}[1]{\ensuremath{\ups{\mathit{add}}(#1)}}
\newcommand{\effsm}[1]{\ensuremath{\ups{\mathit{del}}(#1)}}
\newcommand{\effep}[1]{\ensuremath{\upe{\mathit{add}}(#1)}}
\newcommand{\effem}[1]{\ensuremath{\upe{\mathit{del}}(#1)}}
\newcommand{\ind}[1]{\ensuremath{\iota(#1)}}
\newcommand{\auxa}[1]{\ensuremath{\aux(#1)}}
\newcommand{\auxv}[1]{\ensuremath{\aux_{#1}}}
\newcommand{\comp}[2]{\ensuremath{#1 \circ #2}}
\newcommand{\refined}{\ensuremath{\theta}}
\newcommand{\refine}[1]{\ensuremath{\refined_#1}}
\newcommand{\refinex}[3]{\ensuremath{\refine#1^#2#3}}
\newcommand{\deltax}[2]{\ensuremath{\varepsilon_#1#2}}

\usepackage{array}
\newcolumntype{?}[1]{!{\vrule width #1}}

\newcommand{\xalign}{\:\!}

\title{Enhancing Temporal Planning 
by Sequential Macro-actions (Extended Version)}

\author{%
Marco De Bortoli$^1$\and
Lukáš Chrpa$^2$\and
Martin Gebser$^{1,3}$\and
Gerald Steinbauer-Wagner$^1$ \\
\affiliations
$^1$Graz University of Technology\\
$^2$Czech Technical University in Prague\\
$^3$University of Klagenfurt\\
\emails
\{mbortoli, mgebser, steinbauer\}@ist.tugraz.at,
chrpaluk@cvut.cz
}

\begin{document}

\maketitle

\begin{abstract}

Temporal planning is an extension of classical planning involving concurrent execution of actions and alignment with temporal constraints. Durative actions along with invariants allow for modeling
domains in which multiple agents operate in parallel on shared resources. Hence, it is often important to avoid resource conflicts, where temporal constraints establish the consistency of concurrent actions and events. Unfortunately, the performance of temporal planning engines tends to sharply deteriorate when the number of agents and objects in a domain gets large. A possible remedy is to use macro-actions that are well-studied in the context of classical planning. In temporal planning settings, however, introducing macro-actions is significantly more challenging when the concurrent execution of actions and shared use of resources, provided the compliance to temporal constraints, should not be suppressed entirely.
Our work contributes a general concept of sequential temporal macro-actions that guarantees the applicability of obtained plans, i.e., the sequence of original actions encapsulated by a macro-action is always executable. We apply our approach to several temporal planners and domains, stemming from the International Planning Competition and RoboCup Logistics League. Our experiments yield improvements in terms of obtained satisficing plans as well as plan quality for the majority of tested planners and domains.

\end{abstract}

\section{Introduction}



Temporal planning is a framework capable of dealing with concurrent actions and timing requirements, providing an intuitive syntax for representing planning domains, such as PDDL 2.1 \cite{foxlon03a}, together with off-the-shelf planners, e.g., Optic \cite{ICAPS124699}, to generate plans. As an extension of classical planning, temporal planning offers modeling support for durative actions, their concurrent execution, and the management of temporal constraints. Logistics domains are prominent examples in which such timing information matters, e.g., for planning on-demand transport and delivery, cargo shipment, shuttle services, or just-in-time production, to mention some application areas.  However, with larger numbers of tasks and/or resources to operate and synchronize, the performance of (temporal) planning engines tends to sharply deteriorate, which limits their usability for 
practical problem solving. 

As a possible remedy for the scalability issue, in this paper, we develop a general approach to introduce macro-actions that 
encapsulate sequences of ordinary actions.
Macro-actions are widely studied in classical planning 
\cite{botea2005macro,coles2007marvin,chrpa2014mum}, capable of expressing composite behaviors, 
e.g.,
frequent activities like \emph{load-move-unload} sequences in transport domains,
and proved to enable considerable speed-ups of classical planning engines \cite{chrpa2014mum,ChrpaV22}.
In temporal planning settings, however, the conception of macro-actions is significantly more challenging. Specifically, PDDL 2.1 limits the points in time to which preconditions and effects of durative actions can be linked to an action's start, end, or invariants over the entire duration. 
Hence, a macro-action encapsulating a sequence of durative actions cannot exactly specify the times at which preconditions and effects associated with its constituent actions take place, which puts either the plan validity or concurrency into question.
To give an example, for a \emph{load-move-unload} macro-action, PDDL 2.1 does not permit to accurately capture when an agent leaves the ``loading'' location and arrives at the ``unloading'' location, so that other agents risk collisions or need to avoid both locations for the entire duration.

In this paper, we provide a general concept of sequential temporal macro-actions (Section~\ref{macrodefinition}), i.e., macros 
encapsulating sequences of durative actions,
where preconditions, invariants, and effects are assembled in a fine-grained way to enable concurrent execution when it does not compromise the macro-action applicability.
Sequential temporal macro-actions can be particularly advantageous in logistics domains, where transport and delivery tasks are often accomplished by specific sequences of actions like, e.g., \emph{load-move-unload}.
We thus apply and evaluate our approach using three state-of-the-art planners
on four domains (three of which are logistics-related) from the International Planning Competition as well as the RoboCup Logistics League (Section~\ref{evaluation}),
obtaining improvements in terms of coverage and in
some cases also the quality of plans computed in the same time limit with or
without macro-actions, respectively.
Finally, we discuss related work (Section~\ref{related}) and provide conclusions along with directions for future work (Section~\ref{conclusion}).


\section{Background}\label{background}

In temporal planning, states of the environment are represented by sets of propositions (or atoms). In contrast to classical planning, states of the environment are modified by events that are triggered by application of durative actions. 
A durative \emph{action}~$\act$ is defined by a \emph{duration} $\dura{\act}\in\mathbb{R}^+$, 
the sets $\pres{\act}$, $\pred{\act}$, and $\pree{\act}$ of atoms
specifying \emph{preconditions} that must hold at the start,
as invariants during, or at the end of the action~$\act$, respectively,
as well as the sets $\effs{\act}$ and $\effe{\act}$ of literals determining
\emph{effects} that apply at the start or at the end of~$\act$. 
Let $\effsp{\act}$ (or $\effep{\act}$) and $\effsm{\act}$ (or $\effem{\act}$) denote the sets of atoms occurring as positive or negative literals, respectively, in $\effs{\act}$ (or $\effe{\act}$).
W.l.o.g.\ we assume that
$\effsm{\act}\cap(\effsp{\act}\cup\pred{\act})=\emptyset$
and
$\effem{\act}\cap\effep{\act}=\emptyset$, given that the action~$\act$
would be inapplicable otherwise.
Note that positive effects $\effsp{\act}$ and $\effep{\act}$
enable preconditions required by actions,
while $\effsm{\act}$ and $\effem{\act}$ inhibit actions.

A temporal \emph{planning task}
$\task = (\vars,\acts,\init,\goal)$
consists of a set $\vars$ of atoms,
a set $\acts$ of actions such that
$\pres{\act}\cup\pred{\act}\cup\pree{\act}\cup
 \effsp{\act}\cup\effsm{\act}\cup\effep{\act}\cup\effem{\act}
 \subseteq \vars$
for each $\act\in\acts$,
a (total) \emph{initial state} $\init\subseteq\vars$
represented by the 
atoms that hold,
and a (partial) \emph{goal} $\goal\subseteq\nolinebreak\vars$
inducing 
the goal states $\{\state\subseteq V\mid\goal\subseteq\state\}$.
A \emph{plan}~$\plan$ for $\task$ is a finite set
of \emph{time-stamped actions} $(\stamp,\act)$, where $\stamp\in\mathbb{R}^+$ is 
a starting time for the action $\act\in\acts$,
such that $\stamp_1\neq\stamp_2$, $\stamp_1+\dura{\act_1}\neq\stamp_2$, and
$\stamp_1+\dura{\act_1}\neq\stamp_2+\dura{\act_2}$ for any distinct time-stamped actions $(\stamp_1,\act_1)\nolinebreak\in\plan,\linebreak[1](\stamp_2,\act_2)\in\plan$.
The condition of mutually disjoint starting and ending times 
is in line with the ``no moving targets'' requirement of PDDL 2.1 
and also adopted by Temporal Fast Downward \cite{eymaro09a} for enabling a sequential processing of action effects and state updates.
That is, a plan $\plan$ 
corresponds to a sequence~$\seq$
with $2|\plan|+1$ \emph{time-stamped states}
of the form 
$\seqi{i}=(\stampi{i},\statei{i},\effi{i})$,
where
$\seqi{0}=(0,\init,\emptyset)$ and
successors are inductively
defined as follows 
for $1\leq i\leq 2|\plan|$:%
\begin{align*}
\stampi{i}={}&
\begin{array}[t]{@{}l@{}l@{}}
  \min\big(
      \{\stamp
       \mid
       (\stamp,\act)\in\plan,\stampi{i-1}<\stamp\}
     \\\xalign{}\cup
      \{\stamp
      \mid
      (\stamp,\pred{\act},\pree{\act},\effe{\act})\in\effi{i-1}\}\big)
\text{;}
\end{array}
\\
\statei{i}={}&
 \begin{cases}
  \effsp{\act}\cup
  (\statei{i-1}\setminus\effsm{\act})
  \\ \quad \text{if } (\stampi{i},\act)\in\plan \text{, or}
  \\
  \effep{\act}\cup
  (\statei{i-1}\setminus\effem{\act})
  \\ \quad \text{if } (\stampi{i},\pred{\act},\pree{\act},\effe{\act})\in\effi {i-1}\text{;}
 \end{cases}
\\
\effi{i}={}&
 \begin{cases}
  \effi{i-1}\\ {}\cup
  \{(\stampi{i}+\dura{\act},\pred{\act},\pree{\act},\effe{\act})\}
  \\ \quad  \text{if } (\stampi{i},\act)\in\plan \text{, or}
  \\
  \effi{i-1}\setminus
  \{(\stampi{i},\pred{\act},\pree{\act},\effe{\act})\}
  \\ \quad  \text{if } (\stampi{i},\pred{\act},\pree{\act},\effe{\act})\in\effi{i-1}\text{.}
 \end{cases}
\end{align*}
The sets $\effi{i}$ gather \emph{scheduled effects} for actions that
already started but have not yet ended at time stamp $\stampi{i}$. 
We
call $\seqi{i}$ a \emph{starting} or \emph{ending event}
for $\act$, respectively, if $(\stampi{i},\act)\in\plan$ or
$(\stampi{i},\pred{\act},\pree{\act},\effe{\act})\in\effi{i-1}$, 
and by $\ind{\stampi{i}}=i$
we refer to the sequence position associated with a time stamp $\stampi{i}$.%
\footnote{%
For simplicity of notations and as the considered plan $\plan$ is always clear from the context, we skip
annotations for $\plan$ when writing
$\stampi{i}$, $\statei{i}$, $\effi{i}$, and
$\ind{\stamp}$ to refer to time stamps, states,
scheduled effects, or sequence positions of time-stamped
states $\seqi{i}$ for~$\plan$.}
Moreover, the plan~$\plan$ 
is \emph{consistent} if, for each $1\leq i\leq 2|\plan|$,
we have that
$\bigcup_{(\stamp,\pred{\act},\pree{\act},\effe{\act})\in\effi{i}}\pred{\act}\subseteq \statei{i}$ and
$\pres{\act}\subseteq\statei{i-1}$ or
$\pree{\act}\subseteq\statei{i-1}$
in case $\seqi{i}$ is a starting or ending event for $\act$,
respectively.
If $\plan$ is consistent and $\goal\subseteq\statei{2|\plan|}$,
we call the plan $\plan$ a \emph{solution} for
~$\task$.

\section{Sequential Macro-actions}\label{macrodefinition}

In contrast to classical planning, where actions are viewed
as instantaneous events and modeling their sequential execution
by a macro-action boils down to accumulating preconditions and effects
independently of other actions, concurrent actions
must be taken into account for temporal planning, which makes it
non-trivial to guarantee the applicability of plans with macro-actions.
For instance, when the sequence of a move and a delivery action with the mutually exclusive invariants that a robot must not deliver while moving, and vice versa, should be turned into a macro, simply adding
up effects and invariants would render the macro-action inapplicable,
as the processes of moving and delivering are both triggered 
but neither of them should be in progress according to the combined invariant.
Then, it may appear tempting to dismiss invariants in favor of additional preconditions modeling mutex locks, thus ensuring that atoms affected by a macro-action are not accessed by other
concurrent actions.
While this approach rules out any interferences, it is also disruptive
in the sense that plans get partitioned into episodes:
a macro-action takes exclusive control of its mutex atoms, and only
before starting or after ending the macro-action, other actions may again access the mutex atoms (in parallel).
To see this, consider a macro-action of moving to the sink and washing plates, requiring some dirty plates to be on hold,
and a second macro-action that consists of collecting dirty plates from the table and carrying them to the sink.
Then, if the precondition or effect, respectively, on the availability of dirty
plates were mutually exclusive, it would rule out
that one robot brings dirty plates from the table while another
is washing at the sink, which imposes a too severe restriction.

These considerations lead us to the question how a sequence of durative actions can be turned into a sequential macro to be used for temporal planning (in place of its constituent actions), so that the resulting plan guarantees the sequential applicability of the original actions but does not suppress other concurrent actions unnecessarily.
Adhering to these design objectives, the principal steps of our construction of macro-actions are outlined in Figure~\ref{fig:macro}:
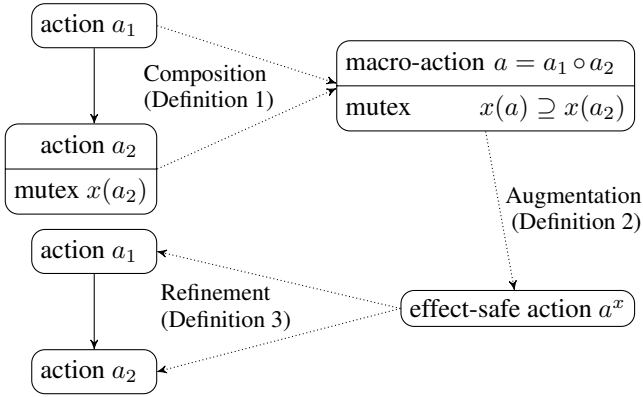
\begin{figure}[t]
\hspace*{-4pt}
\begin{tikzpicture}[x=1pt,y=1pt]
\node[draw,rounded corners=5pt,rectangle split,rectangle split parts=2,rectangle split part align={right, right}
] (a2) {\nodepart{one}action $\act_2$\phantom{$)$}\nodepart{two}mutex $\auxa{\act_2}$};
\node[draw,rounded corners=5pt,above=of a2.north east,anchor=south east] (a1) {action {\rlap{$a_1$}}\phantom{$a_2)$}};
\draw[-stealth'] let \p1 = (a1.south), \p2 = (a2.north) in (\x1,\y1) -- (\x1,\y2) node[midway] (aa) {};
\node[draw,rounded corners=5pt,rectangle split,rectangle split parts=2,rectangle split part align={left, left},text width=106pt,right=88pt of aa] (a) {\nodepart{one}macro-action $\act=\comp{\act_1}{\act_2}$\phantom{$)$}\nodepart{two}mutex\hfill$\auxa{\act}\supseteq
\auxa{\act_2}$};
\draw[densely dotted,-stealth'] (a1.east) -- ([yshift=1pt]a.west) node[left,xshift=-14pt,yshift=-2pt] {\footnotesize\begin{tabular}{l}Composition\\(Definition \ref{def:macro})\end{tabular}};
\draw[densely dotted,-stealth'] (a2.east) -- ([yshift=-1pt]a.west);
\node[draw,rounded corners=5pt,below=60pt of a.south east,anchor=north east] (ax) {effect-safe action $\auxs{\act}$};
\draw[densely dotted,-stealth'] (a.south) -- (ax) node[midway,right,xshift=-1.5pt] {\footnotesize\begin{tabular}{@{}r@{}}Augmentation\\(Definition \ref{def:effect})\end{tabular}};
\node[above left=of ax] (xx) {\phantom{action $a_2$}};
\path let \p1 = (a2.east), \p2 = ($ (ax) !.5! (xx) $) in node[draw,rounded corners=5pt,anchor=east] at (\x1,\y2) (aa1) {action {\rlap{$a_1$}}\phantom{$a_2)$}};
\node[draw,rounded corners=5pt,below=of aa1] (aa2) {action $a_2$\phantom{$)$}};
\draw[-stealth'] (aa1) -- (aa2);
\draw[densely dotted,-stealth'] (ax.west) -- (aa1.east) node[below,xshift=26pt,yshift=-8pt] {\footnotesize\begin{tabular}{l}Refinement\\(Definition \ref{def:refine})\end{tabular}};
\draw[densely dotted,-stealth'] (ax.west) -- (aa2.east);
\end{tikzpicture}
\caption{Outline of the composition and refinement of macro-actions,
augmented with mutex atoms for sound temporal planning.\label{fig:macro}}
\end{figure}
\begin{enumerate}
\item
Definition~\ref{def:macro} specifies how the sequential execution of two actions, $\act_1$ and $\act_2$, is mapped to a macro-action $\act=\comp{\act_1}{\act_2}$, where the internal structure incorporating the ending event for $\act_1$, the starting event for $\act_2$, as well as the invariants $\pred{\act_1}$ and $\pred{\act_2}$ requires particular attention.
In case some literal $\lit$ needs to be excluded from the effects of events taking place within the duration of the macro-action~$\act$,
we introduce a mutex atom $\auxv{\lit}$ in the set $\auxa{\act}$ associated with~$\act$, which extends the corresponding set 
$\auxa{\act_2}$ for $\act_2$. 
This inductive accumulation of mutex atoms accommodates the right-associative chaining of macro-action composition steps,
starting from 
$\auxa{\act_2}=\emptyset$ for 
an ordinary action $\act_2$.
\item
After composing macro-actions~$\act$ and gathering their associated mutex atoms $\auxa{\act}$, Definition~\ref{def:effect} incorporates mutex atoms into the precondions and effects of effect-safe (macro-)actions $\auxs{\act}$.
The main idea is that the precondition of any event is augmented with $\auxv{\lit}$ for corresponding effects $\lit$ that must not apply during some macro-action~$\act$.
Such an effect-safe macro-action $\auxs{\act}$ in turn falsifies $\auxv{\lit}$ at the start and re-enables it at the end, thus suppressing undesired effects of events and also ruling out interferences with (other) macro-actions whose associated mutex atoms include $\auxv{\var}$ or $\auxv{\neg\var}$ for $\lit\in\{\var,\neg\var\}$.
The latter restriction on the concurrent applicability of macro-actions guarantees that mutex atoms are neither manipulated in uncontrolled ways nor that unfolding macro-actions into their constituent actions risks the release of undesired effects.
\item
Given a solution for a planning task built from effect-safe (macro-)actions,
Definition~\ref{def:refine} formalizes how a time-stamped macro-action $(\stamp,\act)$ with $\act=\comp{\act_1}{\act_2}$ is unfolded into the sequence of
$(\stamp_1,\act_1)$ and $(\stamp_2,\act_2)$ to obtain a refined plan.
The introduced time stamps $\stamp_1$ and $\stamp_2$ are chosen such that
$\stamp_1<\stamp$, $\stamp_1+\dura{\act_1}<\stamp_2$, and $\stamp_2+\dura{\act_2}<\stamp+\dura{\act}$, 
where no other starting or ending event takes place in-between
$\stamp_1$ and $\stamp$, $\stamp_1+\dura{\act_1}$ and $\stamp_2$, or
$\stamp_2+\dura{\act_2}$ and $\stamp+\dura{\act}$, respectively.
That is, the starting event for $\act_1$ (or ending event for $\act_2$)
replaces the starting event for $\act$ (or ending event for $\act$), and
the start of $\act_2$ directly succeeds the end of $\act_1$ in the refined plan.
The property that stepwise refinement preserves solutions is stated by Theorem~\ref{thm:refine}, and Corollary~\ref{cor:macro} concludes that all possible orders of refinement steps lead to a similar solution in terms of ordinary (time-stamped) actions.
\end{enumerate}

Our macro-action concept combines preconditions and
effects at the start and end of composed actions as well as their invariants in a fine-grained way, based on the idea of incorporating
invariants if they do not spoil the applicability of a macro-action,
or to gather mutex atoms on literals otherwise. 
\begin{definition}\label{def:macro}
For two actions $\act_1$ and $\act_2$, let
$\pre^1 = \pred{\act_1}\cup\pree{\act_1}$,
$\pre^2 = \pred{\act_2}\cup\pree{\act_2}$,
$\efp^1=\effsp{\act_1}\cup\effep{\act_1}$,
$\efm^1=\effsm{\act_1}\cup\effem{\act_1}$,
$\efp^{12}=\effep{\act_1}\cup\effsp{\act_2}$,
$\efm^{12}=\effem{\act_1}\cup\effsm{\act_2}$, and
$\efm^2=\effsm{\act_2}\cup\effem{\act_2}$
abbreviate particular unions of preconditions or
positive and negative effects of $\act_1$ and $\act_2$.
We define the duration, preconditions, and effects of the
right-associative composition of $\act_1$ and $\act_2$
into the \emph{macro-action} $\act=\comp{\act_1}{\act_2}$ by:
\begin{enumerate}
\item $\dura{\act}=\dura{\act_1}+\dura{\act_2}$
      is the sum of the durations of $\act_1$ and $\act_2$;
\item the preconditions $\pres{\act}$
      at the start of $\act$ are the union of the following sets of
      atoms:
      \begin{enumerate}
      \item the original preconditions $\pres{\act_1}$
            at the start of $\act_1$,
      \item the set $\pre^1\cap\efm^{12}\setminus\effsp{\act_1}$
            of preconditions during or at the end of $\act_1$
            that get falsified by effects $\efm^{12}$
            and are not readily enabled by the start effects $\effsp{\act_1}$, and
      \item the set $\pres{\act_2}\cap\effsm{\act_2}\setminus\efp^1$
            of preconditions at the start of $\act_2$
            that get falsified by the start effects $\effsm{\act_2}$
            and are not enabled by effects $\efp^1$ of $\act_1$; 
      \end{enumerate}
\item the invariants $\pred{\act}$ during the macro-action $\act$ are the
      union of the following sets of
      atoms:
      \begin{enumerate}
      \item the set $\pre^1\setminus(\efm^{12}\setminus\effsm{\act_1})$
            of preconditions during or at the end of $\act_1$
            that do not get falsified by effects $\efm^{12}$
            (and also not by $\effsm{\act_1}$, as
             $\effsm{\act}\cap\pred{\act}\neq\emptyset$
             leaves
             $\act=\comp{\act_1}{\act_2}$
             undefined otherwise),
      \item the set $\pres{\act_2}\setminus(\effep{\act_1}\cup(\effsm{\act_2}\setminus\efm^1))$
            of preconditions at the start of $\act_2$
            that are neither readily enabled by the end effects $\effep{\act_1}$ nor falsified by the start effects $\effsm{\act_2}$ (but not by $\efm^1$ as well, as such a situation yields $\effsm{\act}\cap\pred{\act}\neq\emptyset$, in which case $\act=\comp{\act_1}{\act_2}$ is taken to be undefined), and
      \item the set $\pred{\act_2}\setminus\efp^{12}$
            of invariants during $\act_2$ that are not enabled
            by effects $\efp^{12}$;
      \end{enumerate}
\item $\pree{\act}=\pree{\act_2}\setminus\efp^{12}$ 
      contains original preconditions $\pree{\act_2}$ at the end of $\act_2$
      that are not enabled by effects $\efp^{12}$;
\item the start effects $\effs{\act}$ of $\act$
      are the union of the following sets of
      literals:
      \begin{enumerate}
      \item the set $\effs{\act_1}\setminus\efm^{12}$
            of original start effects of $\act_1$ that do not get
            falsified by effects $\efm^{12}$, and
      \item the set $\{\neg\var \mid \var\in\efm^{12}\}$
            of negative literals anticipating the falsification of atoms by effects $\efm^{12}$; 
      \end{enumerate}
\item the end effects $\effe{\act}$ of $\act$
      are the union of the following sets of
      literals:
      \begin{enumerate}
      \item the original end effects $\effe{\act_2}$ of $\act_2$, and
      \item the set $\efp^{12}\setminus\efm^2$
            of atoms that are enabled by effects $\efp^{12}$
            and do not get falsified by effects $\efm^2$; 
      \end{enumerate}
\end{enumerate}
provided that
$\effsm{\act}\cap\pred{\act}=\emptyset$ 
and
$\pree{\act_2}\cap\efm^{12}\setminus\effsp{\act_2}
=\emptyset$ hold,
while $\act=\comp{\act_1}{\act_2}$ is undefined otherwise.

If the macro-action $\act$ is defined, by
\begin{align*}
\auxa{\act}=
            \{\auxv{\neg\var}\mid{}&
                \begin{array}[t]{@{}r@{}l@{}}
                \var\in\big(&
                (\efp^{12}\setminus(\effep{\act_2}\cup\efm^2)) \\ {}\cup{} &
                (\pre^1\cap\efm^{12})\\ {}\cup{} &
                (\pres{\act_2}\cap\effsm{\act_2}\setminus\effep{\act_1})\\ {}\cup{} &
                (\pre^2\cap\efp^{12})
                \big)\setminus\pred{\act}\}
                \end{array}
\\
{}\cup
                \{\auxv{\var}\mid{}&\var\in\efm^{12}\}
                \cup
                \auxa{\act_2}
\text{,}
\end{align*}
we denote a set of \emph{mutex atoms} $\auxv{\var}$ and $\auxv{\neg\var}$
associated with (the complements of) literals $\var$ and $\neg \var$
occurring in effects and preconditions
of $\act_1$ and $\act_2$,
where we take $\auxa{\act_2}=\emptyset$ 
as base case if $\act_2$ 
is not itself a macro-action defined as the composition of two actions.\qed
\end{definition}

\begin{figure}[t]
\centering
\includegraphics[width=0.43\textwidth]{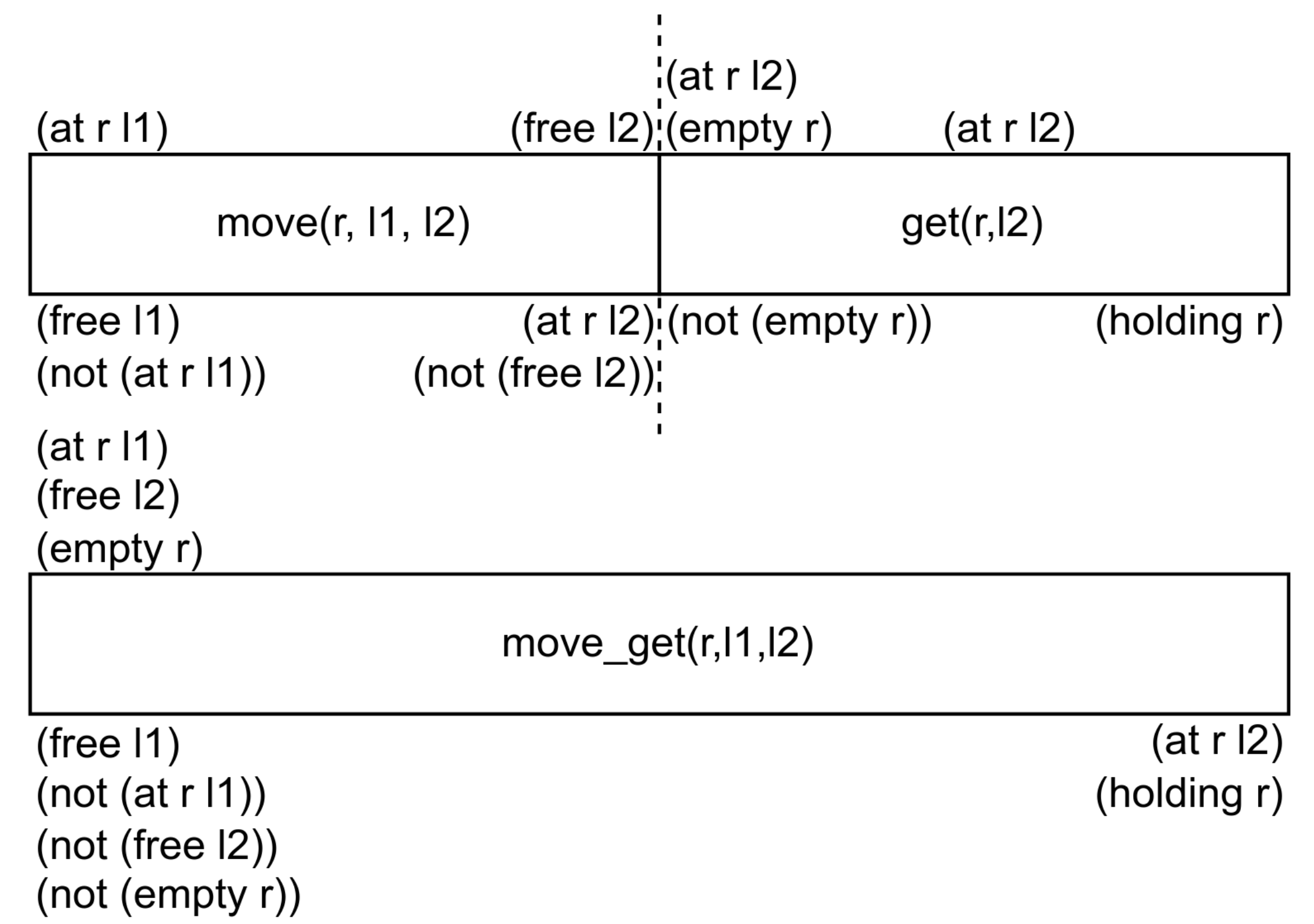} 
\caption{Description of two actions and the resulting macro. Literals below each action represent effects, while the literals above them provide preconditions (at the start, end, or during an action, as indicated by their positions). The mutex atoms of the macro are omitted for better readability.}
\label{macro}
\end{figure}

\begin{example}\label{ex:compose}
To illustrate the composition of macro-actions, Figure~\ref{macro}
visualizes how two actions from a simple temporal domain are composed into a macro-action. The actions at the top involve an agent $r$ capable of moving from a location $l1$ to $l2$ for picking up an object at location $l2$.
(The syntax used for literals is inspired by PDDL, where $\mathit{not}$
represents the logical connective $\neg$.)
First observe that the macro-action displayed at the bottom of Figure~\ref{macro} pulls the delete effects applied at the end of the $\mathit{move}$ or at the start of the $\mathit{get}$ action, i.e., $(\mathit{not}\: (\mathit{free}\: l2))$ and $(\mathit{not}\: (\mathit{empty}\: r))$, together with the original start effects of the $\mathit{move}$ action.
The positive end effect $(\mathit{at}\ r\ l2)$ of the $\mathit{move}$ action, however, joins $(\mathit{holding}\ r)$
at the end of the composed macro-action.

Preconditions at the start of the macro-action include the original
$(\mathit{at}\: r\: l1)$ atom from $\mathit{move}$ together with
$(\mathit{free}\: l2)$ and $(\mathit{empty}\: r)$ required at the
end of $\mathit{move}$ or at the start of $\mathit{get}$, respectively.
The reason for not turning the latter two atoms into invariants required
throughout the macro-action is that their negative literals occur as new
start effects, so that invariants would render the macro-action inapplicable.
Moreover,
the precondition and invariant $(\mathit{at}\: r\: l2)$ of $\mathit{get}$ is not taken as
a precondition or invariant of the macro-action since it is enabled by
the end effect of $\mathit{move}$, which is now postponed to the end of
the macro-action.
Hence, it would be overcautious to insist on the truth
of $(\mathit{at}\: r\: l2)$ at the start or during the entire macro-action.
In fact, considering that any other actions in the domain will hardly admit
$(\mathit{at}\: r\: l1)$, which is a precondition at the start, and
$(\mathit{at}\ r\ l2)$ to hold simultaneously, turning the latter into a precondition or invariant would most likely yield an (unnoticed)
inapplicable macro-action.\qed
\end{example}

In general, the idea of Definition~\ref{def:macro} is to forward delete effects $\effem{\act_1}\cup\effsm{\act_2}$
($(\mathit{not}\: (\mathit{free}\: l2))$ and $(\mathit{not}\: (\mathit{empty}\: r))$ in Example~\ref{ex:compose})
from the ending event for $\act_1$ or the starting event for $\act_2$ to the start of the composed macro-action $\act = \comp{\act_1}{\act_2}$.
In this way, the preconditions of other actions can right from the start not build on atoms getting falsified in the course of the macro-action.
Similarly, the add effects $\effep{\act_1}\cup\effsp{\act_2}$ ($(\mathit{at} \: r\: l2)$ in Example~\ref{ex:compose}),
which may enable preconditions of other actions, are postponed to
the end of the macro-action (unless they get canceled by subsequently
occurring delete effects $\effsm{\act_2}$ and $\effem{\act_2}$).
Taken together, the early application of delete effects and postponement of add effects, which would
originally arise at some point during the macro-action, prevent that other actions building on the volatile
atoms are applied.

Concerning the preconditions of the macro-action~$\act$,
$\pree{\act}$ at the end consists of the atoms in $\pree{\act_2}$ that are not
enabled by the add effects $\effep{\act_1}\cup\effsp{\act_2}$ during
the macro-action.
Note that atoms in $\effsp{\act_2}$ may also belong to the delete effects $\effem{\act_1}$, in which case they are included in $\effsm{\act}$ at the start of~$\act$ and listing them among the preconditions $\pree{\act}$ at the end would render~$\act$
inapplicable.
For the same reason, atoms of the invariant $\pred{\act_2}$ that
get enabled by $\effep{\act_1}\cup\effsp{\act_2}$
during $\act$ are not required as invariants in $\pred{\act}$ for not (unnecessarily) compromising the applicability of~$\act$.
Atoms of the invariant $\pred{\act_1}$ as well as the
preconditions $\pree{\act_1}$ and $\pres{\act_2}$ during $\act$
are taken as invariants in $\pred{\act}$ only if they are not falsified by subsequent delete effects in $\effem{\act_1}$ or
$\effsm{\act_2}$.
Otherwise, such atoms ($(\mathit{free}\: l2)$ and $(\mathit{empty}\: r)$ in Example~\ref{ex:compose}) augment the original preconditions $\pres{\act_1}$ at the start of $\act_1$ in $\pres{\act}$   (unless they get readily enabled by add effects $\effsp{\act_1}$ and $\effep{\act_1}$, as with $(\mathit{at} \: r\: l2)$ in Example~\ref{ex:compose}).
While the composition of $\act_1$ and $\act_2$ into $\act=\comp{\act_1}{\act_2}$ aims at restricting the preconditions
$\pres{\act}$, $\pred{\act}$, and $\pree{\act}$ to necessary parts,
it can happen that $\act_1$ and $\act_2$ are incompatible in
the sense that delete effects undo required preconditions, and
checking that
$\effsm{\act}\cap\pred{\act}=\emptyset$ 
as well as
$\pree{\act_2}\cap(\effem{\act_1}\cup\effsm{\act_2})\setminus\effsp{\act_2}
=\emptyset$
in Definition~\ref{def:macro} 
excludes the composition of incompatible actions.

Although the specific delete effects in Example~\ref{ex:compose} do
not permit taking atoms as invariants of the composed macro-action
$\act=\comp{\act_1}{\act_2}$,
it would be the first choice for, e.g., $(\mathit{empty}\: r)$ from
$\pres{\act_2}$ if it were not also included in $\effsm{\act_2}$.
If this choice cannot be made for an atom $\var$ of interest, as in Example~\ref{ex:compose}, a mutex atom $\auxv{\neg\var}$ is
collected in $\auxa{\act}$ to express that any delete effects on
$\var$ need to be rejected as long as the macro-action $\act$ is
in progress.
The respective cases in Definition~\ref{def:macro} cover all atoms
from the preconditions $\pred{\act_1}$, $\pree{\act_1}$, and
$\pres{\act_2}$ subject to subsequent delete effects,
atoms from $\pred{\act_2}$ and $\pree{\act_2}$ getting enabled
in the course of the macro-action~$\act$, as well as postponed
effects from $\effep{\act_1}$ and $\effsp{\act_2}$ that are
not to be canceled before the ending event for $\act$
(unless any of these atoms belongs to the invariants of $\act$
 by Definition~\ref{def:macro}).
Additional mutex atoms of the form $\auxv{\var}$
are included in $\auxa{\act}$ for delete effects
$\effem{\act_1}\cup\effsm{\act_2}$ occurring during~$\act$.
They signal that add effects on $\var$ must be rejected to
prevent concurrent actions building on atoms that get falsified
during~$\act$.
The mutex atoms associated with macro-actions are then used to
model mutex locks.

\begin{definition}\label{def:effect}
For a planning task $\task = (\vars,\acts,\init,\goal)$,
let $\auxu=\bigcup_{\act\in\acts}\auxa{\act}$ such that
$\auxu\cap\vars=\emptyset$ be the set of mutex atoms for
(macro-)actions $\act\in\acts$.
We define the \emph{effect-safe task} 
$\auxs{\task}=(\auxs{\vars},\linebreak[1]\auxs{\acts},\linebreak[1]\auxs{\init},\linebreak[1]\goal)$
for $\task$ by:
\begin{enumerate}
\item $\auxs{\vars}=\vars\cup\auxu$;
\item 
$\auxs{\acts}=
 \{\auxs{\act}\mid \act\in\acts\}$ with
  \begin{enumerate}
  \item $\dura{\auxs{\act}}=\dura{\act}$,
  \item $\pres{\auxs{\act}}=\pres{\act}\cup\auxa{\act}$
  
  \hfill${}\cup
                                      \{\auxv{\var}\in\auxu\mid\auxv{\neg\var}\in\auxa{\act}\}\cup
                                      \{\auxv{\lit}\in\auxu 
                                        \mid 
                                        \lit\in\effs{\act}\}$,
  \item $\pred{\auxs{\act}}=\pred{\act}$,
  \item $\pree{\auxs{\act}}=\pree{\act}\cup
                                      \{\auxv{\lit}\in\auxu
                                        \setminus\auxa{\act}
                                        \mid 
                                        \lit\in\effe{\act}\}$,
  \item $\effs{\auxs{\act}}=\effs{\act}\cup\{\neg\auxv{\lit} \mid\auxv{\lit}\in\auxa{\act}\}$, and
  \item $\effe{\auxs{\act}}=\effe{\act}\cup\auxa{\act}$;
  \end{enumerate}
  \item $\auxs{\init}=\init\cup\auxu$.
\end{enumerate}
If $\auxs{\plan}$ is a solution for $\auxs{\task}$,
we call $\plan=\{(\stamp,\act) \mid (\stamp,\auxs{\act})\in\auxs{\plan}\}$ the \emph{base plan} of $\auxs{\plan}$.
\qed
\end{definition}

\begin{example}\label{ex:mutex}
Continuing Example~\ref{ex:compose},
the mutex atoms
(omitted in Figure \ref{macro} for better readability)
associated with the composed $\mathit{move}$
and $\mathit{get}$ macro-action
are
$\auxv{(\mathit{not} \: (\mathit{free} \: l2))}$,
$\auxv{(\mathit{not} \: (\mathit{empty} \: r))}$,
$\auxv{(\mathit{not} \: (\mathit{at} \: r \: l2))}$,
$\auxv{(\mathit{free} \: l2)}$, and
$\auxv{(\mathit{empty} \: r)}$.
In the $\mathit{not}$ cases,
they result from preconditions at the end of $\mathit{move}$ and
at the start of or during $\mathit{get}$ that are canceled by subsequent 
delete effects or enabled during the macro-action, respectively.
On the other hand,
$\auxv{(\mathit{free} \: l2)}$ and
$\auxv{(\mathit{empty} \: r)}$ stem from the delete effects at the
end of $\mathit{move}$ and
at the start of $\mathit{get}$.

The effect-safe version of the macro-action includes the above mutex atoms
as well as
$\auxv{(\mathit{at} \: r \: l2)}$,
$\auxv{(\mathit{free} \: l1)}$, and
$\auxv{(\mathit{not} \: (\mathit{at} \: r \: l1))}$ as
additional preconditions at the start
(provided that the latter show up among the mutex atoms
 associated with other macro-actions composed for the domain)
to rule out any interferences with other macro-actions on start effects
or the original atoms~$\var$ occurring as $\auxv{\var}$ or $\auxv{\neg\var}$ among mutex atoms.
When the macro-action composed from $\mathit{move}$
and $\mathit{get}$ is applied, its associated mutex atoms
are set to false at the start in order to reject undesired effects of other actions,
i.e., effects falsifying some required precondition 
or enabling an atom falsified during the macro-action (too early).
These mutex locks get released again at the end of the macro-action,
where $\auxv{(\mathit{at} \: r \: l2)}$ and
$\auxv{(\mathit{holding} \: r)}$ for the end effects
$(\mathit{at} \: r \: l2)$ and
$(\mathit{holding} \: r)$ constitute preconditions (in case
any other macro-actions have them as associated mutex atoms).
Importantly, if either of these atoms were among the mutex atoms
associated with the macro-action itself,
it would not be taken as a precondition for the ending event;
e.g., if $(\mathit{not} \: (\mathit{holding} \: r))$ were an effect at the
start of the $\mathit{get}$ action,
$\auxv{(\mathit{holding} \: r)}$
would be included in the mutex atoms, so that
$\auxv{(\mathit{holding} \: r)}$ is certainly false until
the end of the macro-action due to the modeled mutex lock.
\qed
\end{example}

When several actions are to be chained into a composite macro-action,
we suppose that Definition~\ref{def:macro} is applied from right to left,
i.e., that the macro-action is successively composed from the end.
The reason is that the internal structure of a macro is hidden 
to the outside, and further precaution would be needed if a macro-action
were extended on to later actions in a sequence.
To see this, consider three actions $\act_1$, $\act_2$, and $\act_3$ such
that an atom $\var$ belongs to the delete effects $\effem{\act_1}$ as well as the
preconditions $\pree{\act_3}$ at the end of~$\act_3$.
Then, checking whether
$\pree{\act_3}\cap\effem{\act_1}\setminus\effsp{\comp{\act_2}{\act_3}}
=\emptyset$
according to Definition~\ref{def:macro}
discards the macro-action composition $\comp{\act_1}{(\comp{\act_2}{\act_3})}$,
while $\var$ is forwarded to the delete effects $\effsm{\act}$
at the start of $\act=\comp{\act_1}{\act_2}$ otherwise.
In this case, the applicability conditions in Definition~\ref{def:macro} are too
simplistic to detect that $\var$ cannot hold at the end of $\comp{\act}{\act_3}$, considering that
effects enabling $\var$ are rejected by falsifying $\auxv{\var}$ at the 
start of $\auxs{\act}$. 

Since the preconditions and effects on the mutex atoms
$\auxa{\act}$ for macro-actions $\act$ put additional restrictions
on their concurrent applicability, any solution for an effect-safe task
is also guaranteed to yield a solution for the original planning task,
while the converse does not hold in general.

\begin{proposition}\label{prop:macro}
Let $\task = (\vars,\acts,\init,\goal)$ be a planning task, and
$\auxs{\plan}$ be a solution 
for the effect-safe
task $\auxs{\task}=(\auxs{\vars},\linebreak[1]\auxs{\acts},\linebreak[1]\auxs{\init},\linebreak[1]\goal)$.
Then, 
the base plan $\plan$ of $\auxs{\plan}$
is a solution for $\task$.\qed
\end{proposition}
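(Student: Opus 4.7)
The plan is to establish a simulation between the state sequence $\auxs{\seq}$ induced by $\auxs{\plan}$ on $\auxs{\task}$ and the state sequence $\seq$ induced by $\plan$ on $\task$, and then read off that $\plan$ is consistent and reaches the goal. Since Definition~\ref{def:effect} preserves durations ($\dura{\auxs{\act}}=\dura{\act}$) and $\plan$ inherits every starting time from $\auxs{\plan}$, the multiset of distinct starting and ending time points is identical on both sides, so the two sequences have the same length $2|\plan|+1$ and are indexed by the same $i$.

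First I would prove by induction on $i$ the three invariants
$\stampi{i}=\auxs{\stampi{i}}$,
$\statei{i}=\auxs{\statei{i}}\cap\vars$, and
that $\effi{i}$ is in bijection with $\auxs{\effi{i}}$ via the map sending $(\stamp,\pred{\act},\pree{\act},\effe{\act})$ to $(\stamp,\pred{\auxs{\act}},\pree{\auxs{\act}},\effe{\auxs{\act}})$. The base case holds because $\seqi{0}=(0,\init,\emptyset)$, $\auxs{\seqi{0}}=(0,\auxs{\init},\emptyset)$, and $\auxs{\init}\cap\vars=\init$ since $\auxu\cap\vars=\emptyset$. For the inductive step, the time stamp $\stampi{i}$ is determined from the starting times in $\plan$ (which agree with those in $\auxs{\plan}$) and the second components of $\effi{i-1}$ (which agree with those of $\auxs{\effi{i-1}}$ by the bijection), so $\stampi{i}=\auxs{\stampi{i}}$. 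For the state and scheduled-effects updates, I would inspect the two cases of Definition~\ref{def:effect}: starting events add $\effsp{\act}$ and delete $\effsm{\act}$, while $\auxs{\act}$ additionally manipulates only mutex atoms in $\auxu$, which lie outside $\vars$; ending events are treated symmetrically. Intersecting with $\vars$ therefore kills exactly the mutex contributions and reproduces the update prescribed for $\plan$.

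Next I would show consistency of $\plan$ at each $\seqi{i}$. By Definition~\ref{def:effect} we have $\pres{\act}\subseteq\pres{\auxs{\act}}$, $\pred{\act}=\pred{\auxs{\act}}$, and $\pree{\act}\subseteq\pree{\auxs{\act}}$, and all of $\pres{\act},\pred{\act},\pree{\act}\subseteq\vars$. Since $\auxs{\plan}$ is consistent, the corresponding original precondition set is contained in $\auxs{\statei{i-1}}$ (or $\auxs{\statei{i}}$ for invariants of in-flight actions), and intersecting with $\vars$ yields containment in $\statei{i-1}$ (or $\statei{i}$). Finally, because $\goal\subseteq\vars$ and $\goal\subseteq\auxs{\statei{2|\plan|}}$, the invariant gives $\goal\subseteq\auxs{\statei{2|\plan|}}\cap\vars=\statei{2|\plan|}$, so $\plan$ solves $\task$. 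The main obstacle is the careful bookkeeping in the induction, specifically verifying that the bijection on scheduled-effect tuples is preserved across both types of updates and that the two min-expressions defining $\stampi{i}$ and $\auxs{\stampi{i}}$ really pick the same value; once this is set up cleanly, the rest reduces to the fact that mutex atoms live outside $\vars$ and that augmentation only enlarges preconditions.
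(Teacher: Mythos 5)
Your proposal is correct and follows essentially the same route as the paper's (much terser) proof: both rest on the observations that mutex atoms lie outside $\vars$, that $\auxs{\init}$, the preconditions, and the effects of $\auxs{\act}$ restricted to $\vars$ coincide with those of $\init$ and $\act$, and that consistency and goal satisfaction therefore transfer by intersecting the induced state sequence with $\vars$. You merely spell out the induction that the paper leaves implicit.
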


\begin{proof}
The preservation of plan consistency when turning
$\auxs{\plan}$ into $\plan=\{(\stamp,\act) \mid (\stamp,\auxs{\act})\in\auxs{\plan}\}$ and the correspondence of
time-stamped states 
on the atoms of $\vars$
directly follow from Definition~\ref{def:effect}, given that 
$\auxs{\init}\cap \vars=\init$ and
$\pres{\auxs{\act}}\cap \vars=\pres{\act}$,
$\pred{\auxs{\act}}=\pred{\act}$,
$\pree{\auxs{\act}}\cap \vars=\pree{\act}$,
$\effs{\auxs{\act}}\cap \vars=\effs{\act}$, and
$\effe{\auxs{\act}}\cap \vars=\effe{\act}$
for each action $\auxs{\act}\in\nolinebreak\auxs{\acts}$.
\end{proof}

\begin{example}\label{ex:effect}
Consider a macro-action $\act_0=\comp{\act_1}{\act_2}$
such that $\var_1\in\effsm{\act_1}\cap\pree{\act_2}$,
$\var_2\in\effem{\act_1}$, and
$\var_3\in\effep{\act_2}$.
Then, the mutex atom $\auxv{\var_2}$ is included in
$\effsm{\auxs{\act_0}}$ to reject volatile add effects on
$\var_2$ during the effect-safe version of $\act_0$.
Hence, the starting (or ending) event for any action $\act$ with
$\{\var_1,\var_2\}\subseteq\effsp{\act}$ and
$\auxv{\var_2}\in\pres{\auxs{\act}}$
(or
$\{\var_1,\var_2\}\subseteq\effep{\act}$ and
$\auxv{\var_2}\in\pree{\auxs{\act}}$)
cannot take place in-between the start and end of $\auxs{\act_0}$.
As a consequence, there is no solution for the effect-safe task
$\auxs{\task}=(\{\var_1,\var_2,\var_3\}\cup\{\auxv{\var_2}\},\linebreak[1]
               \{\auxs{\act_0},\auxs{\act}\},\linebreak[1]\{\auxv{\var_2}\},\linebreak[1]\{\var_3\})$,
while
$\task=(\{\var_1,\var_2,\var_3\},\linebreak[1]
               \{\act_0,\act\},\linebreak[1]\emptyset,\linebreak[1]\{\var_3\})$
admits solutions such that $\act$ starts (or ends) within the
duration of $\act_0$.\qed
\end{example}

Example~\ref{ex:effect} shows that the replacement of ordinary actions
by macro-actions, as performed in Section~\ref{evaluation} to increase
the scalability of temporal planning,
needs to be done with care, as it depends on the domain at hand whether
a stricter effect-safe task suppresses interactions between concurrent actions
that are crucial for the satisfiability of a planning task.
However, if a solution for an effect-safe task exists,
it can be refined by unfolding the contained macro-actions $\act=\comp{\act_1}{\act_2}$ into sequential applications of $\act_1$ and $\act_2$ such that
the ending event for $\act_1$ directly precedes the starting event for $\act_2$,
thus reproducing the sequence of events anticipated in the composition of $\act$.

\begin{definition}\label{def:refine}
For a solution $\auxs{\plan}$ 
for an effect-safe task
$\auxs{\task}
 =(\auxs{\vars},\linebreak[1]\auxs{\acts},\linebreak[1]\auxs{\init},\linebreak[1]\goal)$,
we define the base plan $\plan$ of $\auxs{\plan}$
as a \emph{refined plan} of $\auxs{\plan}$ for the planning task
$\task=(\vars,\{\act\mid (\stamp,\act)\in\plan\},\init,\goal)$.
%
If $\auxz{\plan}$ with $(\stamp_0,\act_0)\in\auxz{\plan}$
for a macro-action $\act_0=\comp{\act_1}{\act_2}$ is a refined plan of $\auxs{\plan}$ for the planning task
$\auxz{\task}=(\vars,\{\act\mid (\stamp,\act)\in\auxz{\plan}\},\init,\goal)$,
then we define
$\plan=\{(\stamp_1,\act_1),(\stamp_2,\act_2)\}
       \cup\auxz{\plan}\setminus\{(\stamp_0,\act_0)\}$ such that\pagebreak[1]
\begin{equation*}
\begin{array}[b]{@{}r@{}l@{}}
\deltax{\auxz{\plan}}{(\stamp_0,\act_0)}=
\min\big(
{\textstyle\bigcup_{0\leq i < 2|\auxz{\plan}|}}
(
\{\stamp_0-\stampi{i}\mid
\stampi{i}<\stamp_0\}
\\{}\cup
        \{\stamp_0+\dura{\act_1}-\stampi{i}\mid
        \stampi{i}<\stamp_0+\dura{\act_1}\}
\\{}\cup
        \{\stamp_0+\dura{\act_0}-\stampi{i}\mid
        \stampi{i}<\stamp_0+\dura{\act_0}\}&)\big)
\text{,}
\end{array}
\end{equation*}
$\stamp_0-\deltax{\auxz{\plan}}{(\stamp_0,\act_0)}<\stamp_1<\stamp_0$, and
$\stamp_1+\dura{\act_1}<\stamp_2<\stamp_0+\dura{\act_1}$
as a \emph{refined plan} of $\auxs{\plan}$ for the planning task
$\task=(\vars,\{\act\mid (\stamp,\act)\in\plan\},\init,\goal)$.\qed
\end{definition}

Our main result on refined plans, obtained by unfolding macro-actions
into sequences of their constituents, is that they are guaranteed to
be solutions.
Hence, an effect-safe task allows for performing temporal planning at the level
of macro-actions, and then the original actions can be put back without revising and possibly discarding the resulting plan.

\begin{theorem}\label{thm:refine}
Let $\auxs{\plan}$ be a solution for an effect-safe task
$\auxs{\task}=(\auxs{\vars},\linebreak[1]\auxs{\acts},\linebreak[1]\auxs{\init},\linebreak[1]\goal)$.
Then, any refined plan $\plan$ of $\auxs{\plan}$ for
$\task=(\vars,\linebreak[1]
\{\act\mid (\stamp,\act)\in\plan\},\linebreak[1]\init,\linebreak[1]\goal)$
is a solution for $\task$.\qed
\end{theorem}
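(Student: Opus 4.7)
The plan is to proceed by induction on the number $n$ of macro-action unfoldings used to build $\plan$ according to Definition~\ref{def:refine}. The base case $n=0$ is immediate from Proposition~\ref{prop:macro}: then $\plan$ is the base plan of $\auxs{\plan}$, and hence a solution for the task whose actions are exactly those occurring in $\plan$ (with macro-actions kept intact). For the inductive step, let $\auxz{\plan}$ be a refined plan solving its associated task, pick $(\stamp_0,\act_0)\in\auxz{\plan}$ with $\act_0=\comp{\act_1}{\act_2}$, and let $\plan$ arise from $\auxz{\plan}$ by the substitution prescribed in Definition~\ref{def:refine}.

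The first structural observation is that $\deltax{\auxz{\plan}}{(\stamp_0,\act_0)}$ is chosen precisely so that no starting or ending event of $\auxz{\plan}$ falls in the gaps $(\stamp_1,\stamp_0)$, $(\stamp_1+\dura{\act_1},\stamp_2)$, or $(\stamp_2+\dura{\act_2},\stamp_0+\dura{\act_0})$; together with the strict inequalities on $\stamp_1$ and $\stamp_2$, this preserves mutual distinctness of time stamps and lets the sequences of time-stamped states of $\plan$ and $\auxz{\plan}$ be aligned outside $[\stamp_1,\stamp_2+\dura{\act_2}]$. I would then check the four new events of $\plan$ for consistency by chasing Definition~\ref{def:macro}: $\pres{\act_1}\subseteq\pres{\act_0}$ by clause~(2a) holds at $\stamp_1$; $\pree{\act_1}$ at $\stamp_1+\dura{\act_1}$ is contributed partly by $\pres{\act_0}$ via (2b) for atoms scheduled to be deleted and partly by $\pred{\act_0}$ via (3a) otherwise; $\pres{\act_2}$ at $\stamp_2$ is supplied by (2c), (3b), and by the positive effects of $\act_1$ that are either forwarded at $\stamp_1$ or released at $\stamp_1+\dura{\act_1}$; and $\pree{\act_2}$ at $\stamp_2+\dura{\act_2}$ follows from $\pree{\act_0}$ via (4) together with the postponed positive effects $\effep{\act_1}\cup\effsp{\act_2}$, which are now produced in situ. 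Invariants of $\act_1$ and $\act_2$ during their respective durations reduce via clause~(3) to invariants of $\act_0$ or to atoms enabled after $\stamp_1+\dura{\act_1}$. Finally, clauses~(5) and~(6) ensure that the cumulative effects of $\act_1$ followed by $\act_2$ on atoms of $\vars$ coincide with $\effs{\act_0}\cup\effe{\act_0}$ restricted to $\vars$, so the state at $\stamp_2+\dura{\act_2}$ matches the state at $\stamp_0+\dura{\act_0}$ and in particular satisfies $\goal$.

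The hard part is to argue that other actions of $\auxz{\plan}$ whose starting or ending events lie strictly inside $(\stamp_1,\stamp_2+\dura{\act_2})$ remain consistent under the locally altered state evolution within the macro's interval. Here I would lift the situation back to $\auxs{\plan}$ and invoke the mutex-atom construction of Definition~\ref{def:effect}: every such concurrent event requires $\auxv{\lit}$ to hold for each literal $\lit$ in its own start or end effects, whereas $\auxs{\act_0}$ falsifies every mutex atom in $\auxa{\act_0}$ throughout $[\stamp_0,\stamp_0+\dura{\act_0}]$. The case analysis by which Definition~\ref{def:macro} populates $\auxa{\act_0}$ is designed to cover exactly those atoms of $\vars$ on which the state evolutions of $\plan$ and $\auxz{\plan}$ may disagree inside the macro's interval. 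Hence any surviving concurrent event touches only unperturbed atoms; its preconditions and invariants transfer verbatim from $\auxz{\plan}$ to $\plan$, and the induction closes.
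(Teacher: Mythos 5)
Your overall strategy coincides with the paper's: induction on the number of unfolding steps, base case from Proposition~\ref{prop:macro}, consistency of the four new events by chasing the clauses of Definition~\ref{def:macro}, and an appeal to the mutex atoms for events falling inside the former macro's interval. The substantive gap is in the inductive hypothesis you carry, namely that $\auxz{\plan}$ ``solves its associated task.'' The refined tasks are formulated over $\vars$ alone, so once a macro is unfolded the mutex atoms disappear from the task, and plain consistency of $\auxz{\plan}$ no longer records that the preconditions of the surviving actions are established \emph{outside} the scope of the macros still present in the plan (in particular of $\act_2$, which may itself be a macro awaiting further unfolding). Your remedy, ``lift the situation back to $\auxs{\plan}$,'' is precisely what must be made rigorous, and doing so amounts to strengthening the invariant: the paper defines, for each atom $\var$ and pair of time stamps, the earliest time $\refinex{\plan}{\var}{(\stamp_1,\stamp_2)}$ from which $\var$ stays true up to $\stamp_2$ \emph{and} no macro-action $\act_0$ with $\auxv{\var}\in\auxa{\act_0}$ is in progress, and the induction hypothesis asserts that every start, invariant, and end precondition of every time-stamped action admits such an establishment time strictly (resp.\ weakly) before the event needing it. It is this extra clause, combined with $\auxa{\act_0}\supseteq\auxa{\act_2}$ from Definition~\ref{def:macro}, that lets the mutex protection survive a refinement step; without it, your assertion that ``any surviving concurrent event touches only unperturbed atoms'' is a restatement of what must be proved rather than a consequence of the hypothesis.

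Two further corrections. Your claim that the state at $\stamp_2+\dura{\act_2}$ \emph{matches} the state at $\stamp_0+\dura{\act_0}$ is too strong: after unfolding, add effects of concurrent actions may legitimately fall within the former macro's interval (cf.\ Example~\ref{ex:unfold}), so the paper establishes only the inclusion $\statei{\ind{\auxs{\stamp}+\dura{\auxs{\act}}}}\subseteq\statei{\ind{\stamp'+\dura{\act'}}}$, which suffices for $\goal$. And the mechanism for concurrent events is not purely exclusion: for atoms of $\efm^{12}$ that are deleted later in $\plan$ than in $\auxz{\plan}$, and for atoms of $\efp^{12}$ that are added earlier, the refined states can only \emph{gain} atoms relative to $\auxz{\plan}$, so preconditions satisfied before refinement remain satisfied by monotonicity; the mutex atoms are needed only to prevent concurrent effects from undoing or prematurely exploiting these perturbations. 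Your last paragraph conflates these two halves of the argument.
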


\begin{proof}
For any refined plan $\plan$ of $\auxs{\plan}$ for 
the planning task $\task=(\vars,\linebreak[1]
\{\act\mid (\stamp,\act)\in\plan\},\linebreak[1]\init,\linebreak[1]\goal)$,
an atom $\var\in\vars$, and 
time stamps
$\{\stamp_1,\stamp_2\}\subseteq\{\stampi{i} \mid 1\leq i\leq 2|\plan|\}$
with $
\stamp_1\leq\stamp_2$, let 
\begin{equation*}
\begin{array}{@{}r@{}l@{}}
      \refinex{\plan}{\var}{(\stamp_1,\stamp_2)}=
       {}&\min\big(\{\stampi{2|\plan|}\} 
       \\{}\cup{}&
            \big\{\stamp+\dura{\act} \mid 
             (\stamp,\act)\in\plan,
              \stamp+\dura{\act}<\stamp_1,
              {}\\&
              \phantom{\{}\var\in\effep{\act}\cap{\textstyle\bigcap}_{\ind{\stamp+\dura{\act}}\leq i<\ind{\stamp_2}}\statei{i},
              {}\\&
            \begin{array}[t]{@{}r@{}}
              \phantom{\{}\{(\stamp_0,\act_0)\in\plan \mid
                \auxv{\var}\in\auxa{\act_0},
                \stamp_0<\stamp_2,
              {}\\
                \stamp+\dura{\act}<\stamp_0+\dura{\act_0}\}=\emptyset\big\}
            \end{array}
      \\
            {}\cup{}&
            \big\{\stamp \mid
             (\stamp,\act)\in\plan,
              \stamp<\stamp_1,
              {}\\&
              \phantom{\{}\var\in\effsp{\act}\cap{\textstyle\bigcap}_{\ind{\stamp}\leq i<\ind{\stamp_2}}\statei{i},
              {}\\&
            \begin{array}[t]{@{}r@{}}
              \phantom{\{}\{(\stamp_0,\act_0)\in\plan \mid
                \auxv{\var}\in\auxa{\act_0},
                \stamp_0<\stamp_2,
              {}\\
                \stamp<\stamp_0+\dura{\act_0}\}=\emptyset\big\}
            \end{array}
            \\
            {}\cup{}&
            \big\{0 \mid
              \var\in{\textstyle\bigcap}_{0\leq i<\ind{\stamp_2}}\statei{i},
              {}\\&\phantom{\{}
              \{(\stamp_0,\act_0)\in\plan \mid
                \auxv{\var}\in\auxa{\act_0},
                \stamp_0<\stamp_2\}=\emptyset\big\}\big)
\end{array}
\end{equation*}
denote the least time stamp~$\stamp$ smaller than $\stamp_1$
such that $\var$ remains true from $\statei{\ind{\stamp}}$ until
$\statei{\ind{\stamp_2}-1}$ and no macro-action $\act_0$ with
$\auxv{\var}\in\auxa{\act_0}$ is in progress at any of these states
(or the maximal time stamp at which some time-stamped action
in $\plan$ ends in case no such time stamp~$\stamp$ exists).

By Proposition~\ref{prop:macro},
we have that the base plan $\plan$ of $\auxs{\plan}$ is
a solution for the planning task
$\task=(\vars,\linebreak[1]\acts,\linebreak[1]\init,\linebreak[1]\goal)$.
Moreover, for any time-stamped action $(\stamp,\act)\in\plan$,
the condition \refer{5}{b} in Definition~\ref{def:macro} as well as
\refer{2}{b}, \refer{2}{d}, and \refer{2}{e} in Definition~\ref{def:effect} yield that
\begin{enumerate}
\item
$\refinex{\plan}{\var}{(\stamp,\stamp)}<\stamp$ for
each $\var\in\pres{\act}$,
\item
$\refinex{\plan}{\var}{(\stampi{\ind{\stamp}+1},\linebreak[1]\stamp+\dura{\act})}\leq\stamp$ for
each $\var\in\pred{\act}$, and
\item
$\refinex{\plan}{\var}{(\stamp+\dura{\act},\linebreak[1]\stamp+\dura{\act})}<\stamp+\dura{\act}$ for
each $\var\in\pree{\act}$.
\end{enumerate}

Let $\auxz{\plan}$ be a refined plan of $\auxs{\plan}$
for the planning task
$\auxz{\task} = 
 (\vars,\{\act\mid (\stamp,\act)\in\auxz{\plan}\},\init,\goal)$ such that
$\auxz{\plan}$ is a solution for $\auxz{\task}$
with $(\stamp_0,\act_0)\in\auxz{\plan}$
for a macro-action $\act_0=\comp{\act_1}{\act_2}$.
Assume that,
for any time-stamped action $(\stamp,\act)\in\auxz{\plan}$,
we have that
\begin{enumerate}
\item
$\refinex{\auxz{\plan}}{\var}{(\stamp,\stamp)}<\stamp$ for
each $\var\in\pres{\act}$,
\item
$\refinex{\auxz{\plan}}{\var}{(\stampi{\ind{\stamp}+1},\linebreak[1]\stamp+\dura{\act})}\leq\stamp$ for
each $\var\in\pred{\act}$, and
\item
$\refinex{\auxz{\plan}}{\var}{(\stamp+\dura{\act},\linebreak[1]\stamp+\dura{\act})}<\stamp+\dura{\act}$ for
each $\var\in\pree{\act}$.
\end{enumerate}
Pick some time stamps $\stamp_1$ and $\stamp_2$ such that
$\stamp_0-\deltax{\auxz{\plan}}{(\stamp_0,\act_0)}<\stamp_1<\stamp_0$ and
$\stamp_1+\dura{\act_1}<\stamp_2<\stamp_0+\dura{\act_1}$,
and consider the refined plan
$\plan=\{(\stamp_1,\act_1),(\stamp_2,\act_2)\}
       \cup\auxz{\plan}\setminus\{(\stamp_0,\act_0)\}$
of $\auxs{\plan}$ for the planning task
$\task=(\vars,\{\act\mid (\stamp,\act)\in\plan\},\init,\goal)$.

Then, for any time-stamped action
$(\stamp,\act)\in\auxz{\plan}\setminus\{(\stamp_0,\act_0)\}$,
$\auxa{\act_0}\supseteq\auxa{\act_2}$ by Definition~\ref{def:macro}
implies that
\begin{enumerate}
\item
$\refinex{\plan}{\var}{(\stamp,\stamp)}\leq
 \refinex{\auxz{\plan}}{\var}{(\stamp,\stamp)}<\stamp$ for
each $\var\in\pres{\act}$,
\item
$\refinex{\plan}{\var}{(\stampi{\ind{\stamp}+1},\linebreak[1]\stamp+\dura{\act})}\leq
 \refinex{\auxz{\plan}}{\var}{(\stampi{\ind{\stamp}+1},\linebreak[1]\stamp+\dura{\act})}\leq\stamp$ for
each $\var\in\pred{\act}$, and
\item
$\refinex{\plan}{\var}{(\stamp+\dura{\act},\linebreak[1]\stamp+\dura{\act})}\leq
 \refinex{\auxz{\plan}}{\var}{(\stamp+\nolinebreak\dura{\act},\linebreak[1]\stamp+\dura{\act})}<\stamp+\dura{\act}$ for
each $\var\in\pree{\act}$.
\end{enumerate}

Moreover,
the condition \refer{2}{a} in Definition~\ref{def:macro}
guarantees that
\begin{equation}\label{eq:proof:1}
\refinex{\plan}{\var}{(\stamp_1,\stamp_1)}=
 \refinex{\auxz{\plan}}{\var}{(\stamp_0,\stamp_0)}<\stamp_1
\end{equation}
for
each $\var\in\pres{\act_1}$.
By the conditions \refer{3}{a} and \refer{3}{b}
in Definition~\ref{def:macro},
we have that
\begin{equation*}
\begin{array}{@{}r@{}l@{}}
\big(&\pred{\act_1}\cup\pree{\act_1}\cup {}
\\  (&\pres{\act_2}\setminus\effep{\act_1})\big)\setminus \pred{\act_0}
\subseteq {}
\\
\big(&\pres{\act_0}\cup\effsp{\act_1}\big) \cap 
 \{\var\in\nolinebreak\vars \mid \auxv{\neg\var}\in\nolinebreak\auxa{\act_0}\}\text{,} 
\end{array}
\end{equation*}
so that
\begin{equation}\label{eq:proof:2}
\begin{array}{@{}r@{}l@{}l@{}}
 \refinex{\plan}{\var}{(\stampi{\ind{\stamp_1}+1},\linebreak[1]\stamp_1+\dura{\act_1})}&
 {} \in {}
 \\
 \{
 \refinex{\auxz{\plan}}{\var}{(\stampi{\ind{\stamp_0}+\nolinebreak 1},\linebreak[1]\stamp_0+\dura{\act_0})}&{},\linebreak[1]
 \refinex{\auxz{\plan}}{\var}{(\stamp_0,\stamp_0)},
 \stamp_1\}&
 {}\setminus {}
 \\
 \multicolumn{2}{@{}r@{}}{\{\stampi{2|\auxz{\plan}|}\}}
\end{array}
\end{equation}
for each $\var\in\pred{\act_1}\cup\pree{\act_1}\cup(\pres{\act_2}\setminus\effep{\act_1})$,
and
\begin{equation}\label{eq:proof:3}
 \refinex{\plan}{\var}{(\stamp_2,\stamp_2)}\in
 \{
 \refinex{\auxz{\plan}}{\var}{(\stamp,\stamp)},\linebreak[1]
 \stamp_1,\linebreak[1]
 \stamp_1+\dura{\act_1}\}
 \setminus
 \{\stampi{2|\auxz{\plan}|}\}
\end{equation}
for 
\begin{equation}\label{eq:proof:4}
\stamp=\stampi{\min(\{1\leq\nolinebreak i\leq\nolinebreak 2|\auxz{\plan}| \mid \linebreak[1] \stamp_0+\dura{\act_1}\leq\stampi{i}\})}
\end{equation}
and
each $\var\in\pres{\act_2}$.
Finally, the conditions \refer{3}{c} and \refero{4}
in Definition~\ref{def:macro} yield that
\begin{equation*}
\begin{array}{@{}r@{}l@{}}
 \big(\pred{\act_2}\cup
 (\pree{\act_2}\setminus\pree{\act_0})\big)\setminus \pred{\act_0} & {}
 \subseteq
 {} \\
 \big(\effsp{\act_2}\cup\effep{\act_1}\setminus \effsm{\act_2}&\big) \cap
 {} \\
 \{\var\in\vars \mid \auxv{\neg\var}\in\nolinebreak\auxa{\act_0}&\}\text{,} 
\end{array}
\end{equation*}
so that
\begin{equation}\label{eq:proof:5}
\begin{array}{@{\hspace*{-1pt}}r@{}r@{}l@{}}
 &\multicolumn{2}{@{}l@{}}{\refinex{\plan}{\var}{(\stampi{\ind{\stamp_2}+\nolinebreak 1},\linebreak[1]\stamp_2+\dura{\act_2})}
 \in {}}
 \\
 \{&
 \refinex{\auxz{\plan}}{\var}{(
 \stampi{\min(\{1\leq\nolinebreak i\leq\nolinebreak 2|\auxz{\plan}| \mid \linebreak[1] \stamp_0+\dura{\act_1}\leq\stampi{i}\})}
 &{},{}
 \\ &
 \stamp_0+\dura{\act_0})},
 \stamp_2,
 \stamp_1+\dura{\act_1}\}
 \setminus
 \{\stampi{2|\auxz{\plan}|}&\}
\end{array}
\end{equation}
for each $\var\in\pred{\act_2}\cup(\pree{\act_2}\setminus\pree{\act_0})$, and
\begin{equation}\label{eq:proof:6}
\begin{array}{@{\hspace*{-5.5pt}}r@{}l@{}l@{\hspace*{-2pt}}}
\refinex{\plan}{\var}{(\stamp_2+\dura{\act_2},\stamp_2+\dura{\act_2})}
& {} \in {}
\\
\{
  \refinex{\auxz{\plan}}{\var}{(\stamp_0+\dura{\act_0},\stamp_0+\dura{\act_0})}&{},
  \stamp_2,\linebreak[1]
  \stamp_1+\dura{\act_1}\}
& {} \setminus {}
\\
\multicolumn{2}{@{\hspace*{-5.5pt}}r@{}}{\{\stampi{2|\auxz{\plan}|}\}}
\end{array}
\end{equation}
for each $\var\in\pree{\act_2}$.

The conditions \eqref{eq:proof:1}--\eqref{eq:proof:6} show that $\plan$ is consistent and establish that the induction hypothesis holds for the refined
plan $\plan$ of $\auxs{\plan}$.
Moreover,
$\goal\subseteq\statei{\ind{\max(\{\stamp+\dura{\act} \mid (\stamp,\act)\in\plan\})}}$
holds because
$\statei{\ind{\auxs{\stamp}+\dura{\auxs{\act}}}}
 \subseteq
 \statei{\ind{\stamp'+\dura{\act'}}}$
for each $(\auxs{\stamp},\auxs{\act})\in\auxs{\plan}$
and $(\stamp',\act')\in\plan$ such that 
$\stamp'=\max(\{\stamp \mid (\stamp,\act)\in\plan,
                             \stamp+\dura{\act}\leq
                             \auxs{\stamp}+\dura{\auxs{\act}}\})$.
\end{proof}

\begin{corollary}\label{cor:macro}
Let 
$\auxs{\plan}$ be a solution 
for an effect-safe
task $\auxs{\task}=(\auxs{\vars},\linebreak[1]\auxs{\acts},\linebreak[1]\auxs{\init},\linebreak[1]\goal)$.
Then, 
there is some refined plan $\plan$ of $\auxs{\plan}$ for
$\task=(\vars,\linebreak[1]
\{\act\mid (\stamp,\act)\in\plan\},\linebreak[1]\init,\linebreak[1]\goal)$
such that $\plan$ is a solution for $\task$ and
$\{\act\mid (\stamp,\act)\in\plan\}$ 
does not include any macro-action.
Moreover, for
any refined plan $\plan'$ of $\auxs{\plan}$
such that
$\{\act\mid (\stamp,\act)\in\plan'\}\subseteq
 \{\act\mid (\stamp,\act)\in\plan\}$,
we have that $|\plan'|=|\plan|$, and
for each $1\leq i\leq 2|\plan|$,
$\seqj{i}$ is a starting (or ending) event for $\act$ iff
$\seqi{i}$ is a starting (or ending) event for~$\act$.\qed
\end{corollary}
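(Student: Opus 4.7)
The plan is to derive both parts of Corollary~\ref{cor:macro} from iterated applications of Theorem~\ref{thm:refine}, together with an inductive argument that the event sequence of any macro-free refined plan is determined independently of the order of unfolding steps and of the specific time stamps introduced at each step.

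For the existence claim, I would proceed by induction on the total number of macro-composition operators $\circ$ appearing in the actions of a refined plan. By Proposition~\ref{prop:macro}, the base plan of $\auxs{\plan}$ is a solution. If it still contains a macro-action $\act_0 = \comp{\act_1}{\act_2}$, I unfold it via Definition~\ref{def:refine}: since the starting and ending times of time-stamped actions in any plan are pairwise distinct, $\deltax{\auxz{\plan}}{(\stamp_0,\act_0)}$ is strictly positive, so valid choices of $\stamp_1$ and $\stamp_2$ exist in the prescribed open intervals. Theorem~\ref{thm:refine} then guarantees that the resulting refined plan is again a solution. Each unfolding strictly decreases the total count of $\circ$ operators, so the process terminates with a macro-free solution $\plan$.

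For the uniqueness claim, I fix the macro-free $\plan$ from above and consider any refined plan $\plan'$ with $\{\act \mid (\stamp,\act)\in\plan'\} \subseteq \{\act \mid (\stamp,\act)\in\plan\}$; since $\plan$ contains no composed actions, $\plan'$ is macro-free as well. I would then show by induction on the unfolding steps that each refinement of a time-stamped macro $(\stamp_0,\act_0)$ with $\act_0 = \comp{\act_1}{\act_2}$ restructures the event sequence purely locally: the starting event for $\act_1$ takes the former position of $\act_0$'s starting event, the two new events for the ending of $\act_1$ and the starting of $\act_2$ are inserted consecutively just before the first old event with time stamp at least $\stamp_0 + \dura{\act_1}$, and the ending event for $\act_2$ takes the shifted position of $\act_0$'s former ending event. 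The bounds $\stamp_0 - \deltax{\auxz{\plan}}{(\stamp_0,\act_0)} < \stamp_1 < \stamp_0$ and $\stamp_1 + \dura{\act_1} < \stamp_2 < \stamp_0 + \dura{\act_1}$ ensure that no other events of $\auxz{\plan}$ fall into the intervals used for positioning the new events, so the event sequence positions and the associated event types are independent of the particular choice of $\stamp_1$ and $\stamp_2$.

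The main obstacle will be handling nested macro-actions cleanly, since an inner macro may be unfolded either before or after an enclosing one. The key observation is that each unfolding only affects events whose time stamps lie strictly within the current macro's interval, so unfoldings of disjoint or properly nested macros commute. Hence the final macro-free event sequence can be described as the canonical systematic unfolding of $\auxs{\plan}$, in which every macro-action occurrence is replaced in its own local time window by the starting and ending events of its constituent atomic actions, which yields precisely the matching event sequences for $\plan$ and $\plan'$ asserted in the corollary.
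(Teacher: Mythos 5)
Your proposal is correct and follows essentially the same route as the paper's (very terse, one-sentence) proof: iteratively unfold macro-actions via Definition~\ref{def:refine}, invoke Theorem~\ref{thm:refine} to preserve solutionhood at each step, and observe that the locality of each splitting---new events land only in the event-free windows of width $\deltax{\auxz{\plan}}{(\stamp_0,\act_0)}$ around $\stamp_0$, $\stamp_0+\dura{\act_1}$, and $\stamp_0+\dura{\act_0}$---makes the resulting sequence of time-stamped states identical up to concrete time stamps, independently of the order of unfoldings and the chosen stamps. Your additional bookkeeping (induction on the number of $\circ$ operators for termination, positivity of $\deltax{\auxz{\plan}}{(\stamp_0,\act_0)}$ for the existence of valid stamps) just fills in details the paper leaves implicit; the only small imprecision is speaking of ``disjoint or properly nested'' macros, since time-stamped macro-actions may also partially overlap in time, but the locality argument covers that case as well.
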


\begin{proof}
All refined plans $\plan$ of $\auxs{\plan}$ obtained from
the base plan of $\auxs{\plan}$ by iteratively splitting
time-stamped (macro\nobreakdash-)actions
$(\stamp_0,\act_0)$ with $\act_0=\comp{\act_1}{\act_2}$
into $(\stamp_1,\act_1)$ and $(\stamp_2,\act_2)$
according to Definition~\ref{def:refine} yield
up to concrete time stamps identical sequences of time-stamped states
and are by Theorem~\ref{thm:refine} solutions
for $\task$.
\end{proof}

The crucial property guaranteed by solutions for an effect-safe task
is that volatile preconditions and effects signaled by mutex atoms
$\auxv{\neg\var}$ or $\auxv{\var}$, respectively, in $\auxa{\act}$
cannot be manipulated in uncontrolled ways during a macro-action~$\act$.
In particular, preconditions of concurrent actions must not rely on
$\var$ when $\auxv{\var}$ indicates that the atom $\var$ gets falsified
at some point within the duration of $\act$.
This does not necessitate $\var$ to stay excluded from add effects
applying when macro-actions are unfolded, yet any such effect is not exploited
to build and refine a baseline solution for the effect-safe task.

\begin{example}\label{ex:unfold}
Let $\act_1$ and $\act_2$ be macro-actions such that
$\auxv{\var}\in\auxa{\act_1}$ and $\var\in\effep{\act_2}$.
Then, the ending event for $\auxs{\act_2}$ cannot be placed in-between
the start and end of $\auxs{\act_1}$ in order to prevent that other actions
build on the volatile atom~$\var$.
However, when $\act_1$ and $\act_2$ are unfolded into
their constituent actions, $\var$ may well be included
in add effects falling within the duration of $\act_1$,
where temporarily enabling $\var$ does not compromise the
preconditions of any concurrent action.\qed
\end{example}

\section{Evaluation}\label{evaluation}

We evaluate the impact of sequential macro-actions on planning performance by applying three state-of-the-art planners to solve instances from four domains.
The first domain consists of a PDDL encoding of the RoboCup Logistics League (RCLL) along
with the instance collection used by \citeauthor{symp} (\citeyear{symp}) for assessing and comparing domain models with manually defined macros.
In the RCLL domain, a team of three autonomous mobile robots cooperatively assembles 
products by interacting with production stations in a real-world environment. A characteristic property of the RCLL is that a robot can only perform one specific action at its current location before it has to move on. This makes the domain well-suited for macro-actions, as 
\emph{move} actions can be merged with 17 different types of \emph{interaction} actions (to accomplish some production subtask at a given location). 
The other three domains 
originate from the International Planning Competition (IPC) \cite{vallati_chrpa_mccluskey_2018}: Road Traffic Accident Management (RTAM), Driverlog, and Satellite. 

The RTAM domain (unlike RCLL and Driverlog) associates specific tasks with separate types of agents: ambulances, police cars, and tow trucks. Since some of these agents may have to perform more than one task at a given location, we opted for a different approach to compose macro-actions than used for RCLL. Instead of creating macro-actions by merging a \emph{move} with an \emph{interaction} action, some of the latter actions are taken together. More specifically, \emph{first\_aid} and \emph{load\_victim} (to an ambulance) form a macro-action, and a second one is constructed by combining \emph{unload\_victim} and \emph{deliver\_victim} (to a hospital).

In Driverlog, a number of drivers need to walk to the positions of trucks, get on board, and drive them between different locations in order to load and unload packages. Differently from RCLL, in this domain it may not be possible to reach every target location by performing a single \emph{move} action. As a consequence, merging \emph{move} with \emph{interaction} actions and planning with the constructable macro-actions only risks to discard routes available in the original domain and may make problem instances infeasible.
However, this issue can be resolved by connecting each pair of locations by a \emph{move} action whose duration corresponds to a shortest path determined in a pre-processing step,
so that every target location becomes reachable by means of a single \emph{move}.
Note that, for each plan based on this new domain, a corresponding plan for the original domain exists, and vice versa.
For comparability, we perform our Driverlog experiments with the pre-processed domain
incorporating shortest paths, which is also taken as basis for introducing macro-actions.
The macro-action composition is, however, still more sophisticated than for RCLL because the
possibility of performing several \emph{load} and \emph{unload} actions at the same location would induce an imbalance when merging them with \emph{move} actions.
Unlike that, the \emph{walk} action of a driver can readily be merged with \emph{board} as well as \emph{disembark}, leading to the two macro-actions \emph{walk\_board} and \emph{disembark\_walk}, where the second one may be needed once at the end of a plan in order to bring drivers to their goal locations.%
\footnote{We encountered the imbalance between \emph{move} actions for trucks and possible tasks to perform at a location, arising due to the specific packages included in instances, in our preliminary investigations, and did then only make limited use of shortest paths to build macros from \emph{walk} actions. However, other ideas to compose macro-actions may also take \emph{move} actions for trucks into account.}

The Satellite domain is rather simple by containing 5 actions only: \emph{turn\_to}, \emph{calibrate}, \emph{take\_image}, \emph{switch\_on}, and \emph{switch\_off}. The goals consist of taking pictures towards specific directions, after activating and calibrating the corresponding instruments. We here opt to use \emph{turn\_to\_calibrate} and \emph{turn\_to\_take\_image} as two macro-actions, considering that plans for the original domain frequently include these action pairs in sequence.

\begin{figure*}
  \centering
  \includegraphics[width=0.95\textwidth]{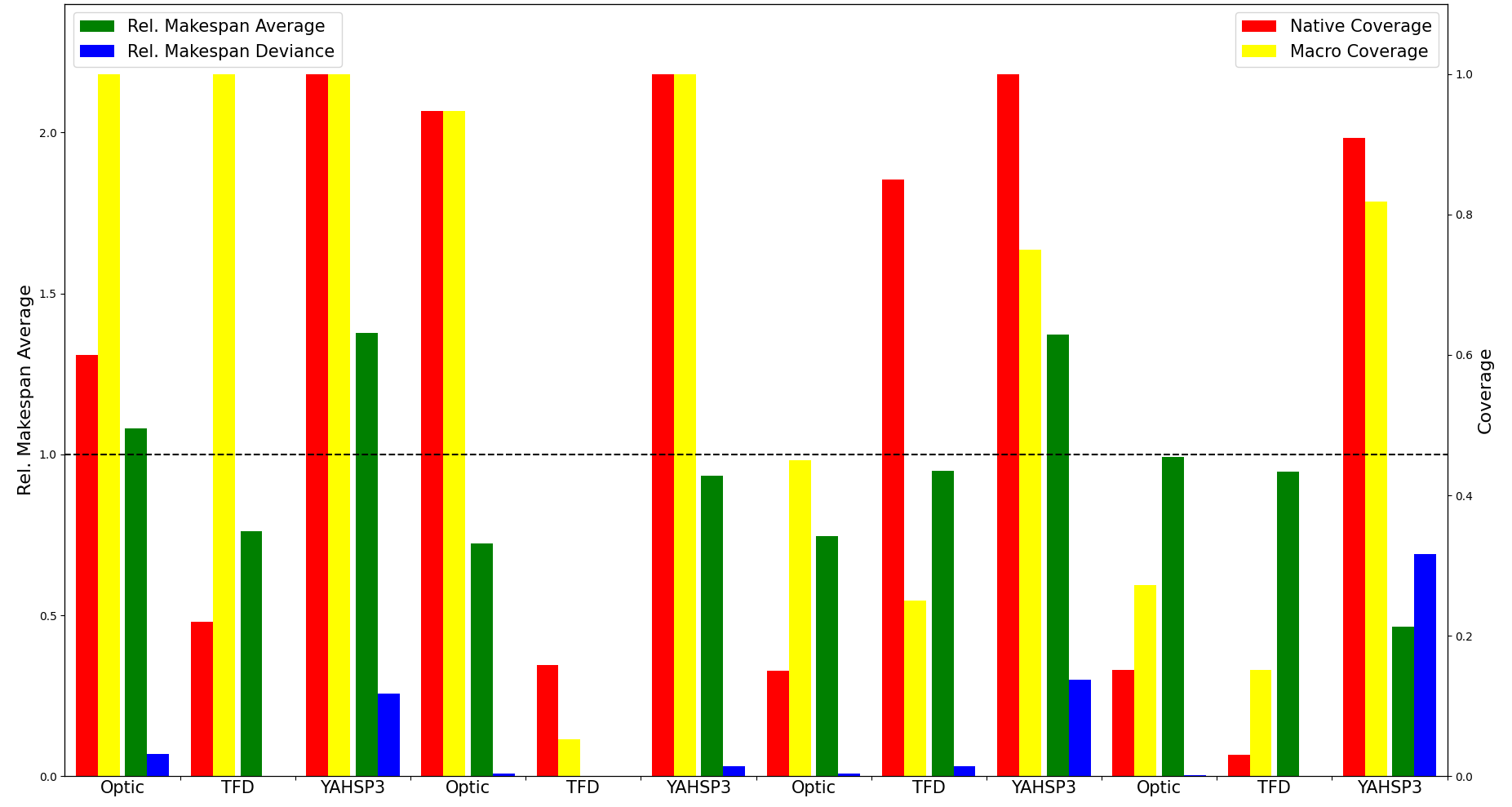}
  \caption{For each combination of domain and planner, the chart displays the Coverage and average Relative Makespan over the instance set.}
  \label{eval}
\end{figure*}%
  For each domain, the original actions composing the introduced macros are replaced by the macro-actions in order to improve the solving process, at the potential cost of losing optimality in case applying the ordinary actions off sequence permits plans to finish earlier.
  Our comparison includes three state-of-the-art planners: the popular Optic system \cite{ICAPS124699}, also serving as baseline planner at the IPC 2018 edition, as well as the Temporal Fast Downward (TFD) \cite{eymaro09a} and YAHSP3 \cite{vidal2014yahsp3} planners, which achieved the runner-up and winner positions at the IPC 2014 edition \cite{vallati_chrpa_mccluskey_2018}. 
  The benchmark set consists of 50 instances for RCLL, 20 instances for both RTAM and Satellite, and 44 instances for Driverlog,
  where we run each planner for up to 15 minutes wall-clock time per instance 
  on a PC equipped with an Intel Core i5 10300h CPU and 16 GB RAM under Ubuntu 18.04,
  using either the original or the macro-action domain.
  Notably, the composition of macro-actions as specified in Section~\ref{macrodefinition} is automatically performed at the level of first-order PDDL domains by a Java tool we developed for this purpose.%
  \footnote{\url{https://gitlab.com/mbortoli/temporalmacro}}

  Figure~\ref{eval} indicates the original domain by ``Native'' and the new one replacing
  some of the ordinary actions by ``Macro''.
  The displayed metrics are Coverage, i.e., the ratio of instances for which some satisficing plan is obtained to the number of all instances in a domain, and average 
  Relative Makespan, comparing the finishing time of best plans found in 15 minutes between the original domain and the one with macro-actions.
  That is, the Relative Makespan considers instances such that a planner found at least one solution for either version of the domain,
  where values greater than $1$ express better plan quality for the domain with macros, or worse plan quality otherwise.
  In addition, we quantify the Relative Makespan deviance as an indicator of the plan quality differences due to the version of a domain, and greater values mean that the plan quality per instance varies significantly w.r.t.\ the (non-)use of macros.

As can be seen by surveying the 12 combinations of domains and planners in Figure~\ref{eval}, we achieve noticeable and relatively consistent improvements
in the Coverage metric, as anticipated in view of the reduced combinatorics due to macro-actions.
The Coverage improvements are particularly striking for Optic and TFD on the RCLL domain, where all ordinary actions of the native domain can be combined into macros that reduce the level of detail addressed in the planning process.
With the YAHSP3 planner, some satisficing plan is obtained for each RCLL instance, regardless whether macros are used, and it is generally geared towards quickly finding some satisficing plan, which leads to a higher Coverage than achieved by Optic and TFD.
In four cases, concerning TFD on RTAM and Satellite as well as YAHSP3 on Satellite and Driverlog, the use of macro-actions yields lower Coverage.
This observation shows that the effectiveness of introducing macros is relative to the planner as well as the domain under consideration, as replacing actions affects the planning process, e.g., pre-processing techniques and search heuristics. 
While TFD does not seem to benefit from replacing some but not all of the ordinary actions in a domain by macros, Optic consistently benefits and obtains plans for the same number of instances or more.


Although the introduction of macro-actions aims at improving the planning performance without taking the finishing time of (still) feasible plans into account,
it can also be informative to check whether macro-actions allow
planners to find better plans in the given time limit of 15 minutes per run.
When inspecting the average Relative Makespan in Figure~\ref{eval},
where values greater than $1$ indicate shorter finishing times for plans obtained with macro-actions, we observe that the plan quality improves for 3 combinations of domains
and planners, and deteriorates for 8 of them, although they are often almost on par.
Here it is apparent that all of the planners compute longer (or no) plans for the macro version of the RTAM domain, i.e., the average Relative Makespan lies below $1$.
Beyond that, TFD yields worse plan quality with macro-actions for the RCLL, Satellite and Driverlog domains, and the same applies to Optic on Satellite.
We checked that, for TFD, such deterioration is mainly caused by the planner's pre-processing phase, which becomes 
much more time-consuming when macro-actions are introduced.
On the other hand, YAHSP3 manages to find plans of significantly better quality with macro-actions for two domains while being considerably worse in Driverlog, as also confirmed by the deviances taken over instances for which both the native and the macro domain lead to at least one solution.
That is, in some cases the reduced combinatorics due to macro-actions yields not only more satisficing but also better quality of plans obtainable in limited solving time, while such outcomes can hardly be predicted and require experimentation with a specific planner and domain.

Our experiments have shown that enhancing domains with macro-actions can change the landscape of heuristic features, having an impact on the quality of solutions and how fast planners can find them.
Summarizing the four investigated domains,
the Coverage on the RTAM domain drops for TFD only,
yet macro-actions deteriorate the Relative Makespan with YAHSP3 and Optic by a smaller or greater margin, respectively.
We conjecture that the reason for this resides in a much larger number of ordinary actions that remain in comparison to those replaced by macros.
As a consequence, performance gains due to reducing the number of actions may be outweighed by the overhead induced by the mutex atoms associated with macro-actions. 
The RCLL domain represents the opposite characteristic, as all ordinary actions of the native domain can be combined into macros, which
simplifies the planning process.
Hence, all compared planners benefit in terms of Coverage and find some solution for every instance, where YAHSP3 that obtains at least one solution per instance for either domain version also yields a better Relative Makespan.
The Satellite and Driverlog domains lie in-between the above two cases, and it depends on the particular planner and its solving techniques whether macros pay off (Optic) or the impact varies between domains (TFD and YAHSP3).




\section{Related Work}\label{related}
Macro-actions are well-known in classical planning, starting with the STRIPS \cite{FikesN71} and REFLECT \cite{dawson1977role} systems in the 1970s. Classical planning systems may generate macro-actions in pre-processing or on the fly during the planning process.
%
An example of the latter is the Marvin planner \cite{coles2005inference,coles2007marvin}, which determines macro-actions 
in the course of a plateaux-escaping technique for the well-known FF planner \cite{HoffmannN01}. 
The MACRO-FF planner \cite{botea2005macro} features two different pre-processing techniques to generate macros in classical planning, Component Abstraction (CA) and Solution (SOL): CA searches in the space of potential macros and identifies promising ones based on specific rules, while SOL extracts macros from sample plans by analyzing causal links between actions. \citeauthor{newton2007learning} (\citeyear{newton2007learning}) 
propose a technique to learn planner-independent macro-actions by means of genetic algorithms. \citeauthor{hofmann2017initial} (\citeyear{hofmann2017initial}) use Map Reduce to search through a plan database for macro-actions. Their work has recently been extended to support ADL features \cite{HofmannNL20}. 
\citeauthor{miura2017automatic} (\citeyear{miura2017automatic}) present a technique to derive axioms reducing the number of (explicit) actions in the planning process. 
MUM \cite{chrpa2014mum} is another system able to learn macro-actions from sample plans such that the number of instances of learned macro-actions is minimized. Recent works \cite{chrpa2019improving,ChrpaV22} concern the generation of macro-actions by observing ``critical sections'' over ``lockable'' resources in action sequences. 

In contrast to classical planning,
very few works consider 
macro-actions in the context of temporal planning. \citeauthor{wullinger2008spanning} (\citeyear{wullinger2008spanning}) present a technique to generate macro-actions out of partially overlapping temporal actions. Their macro-actions are, however, not guaranteed to be executable and must be filtered through a separate consistency check. To our knowledge, the most recent approach to define temporal macro-actions stems from a master thesis \cite{hansson2018temporal}, where procedures for the construction of sequential and parallel macro-actions, with support for numeric fluents, are provided. 
The used model of durative actions deviates from PDDL 2.1 \cite{foxlon03a} by swapping the order of evaluating preconditions and applying effects at the end of actions.
Concerning the (informally presented) macro-action construction, mutex locks are introduced in virtually all cases where internal events may change the values of fluents during a macro-action.
This is different from our, less restrictive macro-action concept, which permits compatible effects 
of concurrent actions to fall within the duration of a macro-action.
To illustrate the diverging ideas, consider the composition of two actions $\act_1$ and $\act_2$ 
with $\var_1\in\effsp{\act_1}$, $\var_2\in\effep{\act_1}\cup\effsp{\act_2}$, and $\var_3\in\pree{\act_2}$ into a macro-action $\act_0=\comp{\act_1}{\act_2}$, along with an ordinary action $\act$ such that $\var_1\in\pres{\act}$, $\var_2\in\effsp{\act}\cup\effep{\act}$, and $\var_3\in\effep{\act}$.
Then, $\act_0$ needs to start before $\act$ to enable the precondition $\var_1$, while the end event for $\act$ in turn enables the precondition $\var_3$ required at the end of $\act_0$.
Such concurrent execution is admitted by our macro-action concept, as $\auxv{\neg\var_2}\in\auxa{\act_0}$ only rejects a cancellation of the internal add effect on~$\var_2$.
Unlike that, locking $v_2$ by a mutex according to \citeauthor{hansson2018temporal} (\citeyear{hansson2018temporal}) rules out that $\act_0$ and $\act$ are executed in parallel,
so that neither the macro-action $\act_0$ nor the ordinary action $\act$ is applicable.


Macro-actions and abstractions also find application in languages and paradigms beyond PDDL. The Modular Action Description language \cite{lifschitz2006modular} aims to create a database of general-purpose knowledge about actions, which can be referred to for defining a new specific action. \citeauthor{fadel2002planning} (\citeyear{fadel2002planning}) presented a method to use complex actions as planning operators in the situation calculus, together with a compiler to generate the corresponding PDDL code. \citeauthor{gabaldon2002programming} (\citeyear{gabaldon2002programming}) introduced the concept of Hierarchical Task Networks in the situation calculus by defining hierarchical complex actions in GonGolog.
Moreover, \citeauthor{banihashemi2017abstraction} (\citeyear{banihashemi2017abstraction}) define a general framework for agent abstraction, where high-level domains described in the situation calculus are mapped to ConGolog programs and low-level formulas.

\section{Conclusion and Future Work}\label{conclusion}

Temporal planning, which involves concurrent execution of actions and sharing of resources, allows for modeling and solving a variety of planning and scheduling tasks. However, the high computational complexity of temporal planning remains a notorious obstacle for its successful application to challenging target domains. A popular approach in classical planning to reduce combinatorics and boost the performance of the planning process is the introduction of macro-actions.

 In this paper, we propose a general concept of sequential macro-actions for temporal planning that guarantees the applicability of plans. Sequential macro-actions are particularly advantageous in logistics domains, where it is common that the activities of agents follow specific patterns.
 Our experiments investigate the performance of three state-of-the-art planners on four domains (out of which three are logistics-related).
 For the majority of tested planners and domains, more satisficing plans and in some cases also better plan quality are obtained when frequent sequences of ordinary actions are encapsulated and replaced by a macro.
 In fact,
 while native domains always admit solutions that are at least as good as a plan with macro-actions, enhancing temporal domains by macro-actions can sometimes help to guide planners to suitable solutions in shorter solving time. 
 This is particularly the case when the macro-actions subsume and replace a large portion of ordinary actions, which is not unlikely for logistics domains.  
 However, our macro-action concept is not exclusive to logistics domains and can be 
 applied to any temporal planning task, yet the substitution of ordinary actions by macro-actions needs to be done with care to preserve satisficing or optimal plans, respectively.
 
As a part of future work, we want to investigate methods to automatically detect suitable candidates for macro-actions in a given domain. Moreover, the formalization of further kinds of macro-actions in temporal planning, like parallel or, more generally, overlapping macro-actions and support for numeric fluents, constitutes an interesting future direction.

\section*{Acknowledgements}

M.\ De Bortoli and M.\ Gebser were funded by Kärntner Wirtschaftsförderungs Fonds (project no.\ 28472), cms electronics GmbH, FunderMax GmbH, Hirsch Armbänder GmbH, incubed IT GmbH, Infineon Technologies Austria AG, Isovolta AG, Kostwein Holding GmbH, and Privatstiftung Kärntner Sparkasse.
L.\ Chrpa was funded by the Czech Science Foundation (project no.\ 23-05575S).
M.\ De Bortoli's and M.\ Gebser's visit to CTU in Prague was funded by the OP VVV project no. EF15\_003/0000470 ``Robotics 4 Industry 4.0'' and by the Czech Ministry of Education, Youth and Sports under the Czech-Austrian Mobility programme (project no.\ 8J22AT003), respectively.
L.\ Chrpa's visits to University of Klagenfurt were funded by OeAD, Austria's Agency for Education and Internationalisation (project no.\ CZ 15/2022).

\bibliographystyle{kr}
\bibliography{aaai23}

\end{document}